\newtheorem{Thm}{Theorem}[section]
\newtheorem{Lem}{Lemma}[section]
\newtheorem{Cor}{Corollary}[section]
\newtheorem{Rem}{Remark}[section]
\newtheorem{Set}{Setting}[section]
\DeclareMathOperator*{\argmin}{arg\,min}
\DeclareMathOperator*{\E}{\mathbb{E}}
\newcommand{\R}{\mathbb R}
\title{Differentially Private Meta-Learning}
\author{Jeffrey Li, Mikhail Khodak, Sebastian Caldas\\
Carnegie Mellon University\\
\texttt{jwl3@cs.cmu.edu}\\
\AND
Ameet Talwalkar\\
Carnegie Mellon University \& Determined AI\\
  %
}
\begin{document}

\maketitle

\begin{abstract}
  Parameter-transfer is a well-known and versatile approach for meta-learning, with applications including few-shot learning, federated learning, 
  and reinforcement learning. However, parameter-transfer algorithms often require sharing models that have been trained on the samples from specific tasks, thus leaving the task-owners susceptible to breaches of privacy.
  We conduct the first formal study of privacy in this setting and formalize the notion of \textit{task-global differential privacy as a practical relaxation of more commonly studied threat models}.
  We then propose a new differentially private algorithm for gradient-based parameter transfer that not only satisfies this privacy requirement but also retains provable transfer learning guarantees in convex settings.
  Empirically, we apply our analysis to the problems of federated learning with personalization and few-shot classification, showing that allowing the relaxation to task-global privacy from the more commonly studied notion of \textit{local privacy} leads to dramatically increased performance in recurrent neural language modeling and image classification.
\end{abstract}

\section{Introduction}

The field of \textit{meta-learning} offers promising directions for improving the performance and adaptability of machine learning methods. At a high level, the key assumption leveraged by these approaches is that the \textit{sharing} of knowledge gained from individual learning tasks can help catalyze the learning of similar unseen tasks. However, the collaborative nature of this process, in which task-specific information must be sent to and used by a \textit{meta-learner}, also introduces inherent data privacy risks.

In this work, we focus on a popular and flexible meta-learning approach, \textit{parameter transfer} via \textit{gradient-based meta-learning} (GBML).  This set of methods, which includes well-known algorithms such as MAML~\citep{Finn2017MAML} and Reptile~\citep{Nichol2018Reptile}, tries to learn a common initialization $\phi$ over a set of tasks $t=1,\dots,T$ such that a high-performance model can be learned in only a few gradient-steps on new tasks. Notably, information flows constantly between training tasks and the meta-learner as learning progresses; to make iterative updates, the meta-learner obtains feedback on the current $\phi$ by having task-specific models $\bar{\theta}_t$ trained with it. 

Meanwhile, in many settings amenable to meta-learning, it is crucial to ensure that sensitive information in each task's dataset stays private. Examples of this include learning models for word prediction on cell phone data \citep{McMahan2017DPLanguageModel}, clinical predictions using hospital records \citep{Xi2019}, and fraud detectors for competing credit card companies \citep{Stolfo1997CreditCF}. In such cases, each data-owner can benefit from information learned from other tasks, but each also desires, or is legally required, to keep their raw data private.  Thus, it is not sufficient to learn a well-performing $\phi$; it is equally imperative to ensure that a task's sensitive information is not obtainable by \textit{anyone} else.

While parameter transfer algorithms can move towards this goal by peforming task-specific optimization locally, thus preventing direct access to private data, this provision is far from fail-safe in terms of privacy. A wealth of work has shown in the single-task setting that it is possible for an adversary with only access to the model to learn detailed information about the training set, such as the presence or absence of specific records  \citep{Shokri2016MembershipInference}  or the identities of sensitive features given other covariates \citep{Fredrikson2015ModelInversion}. Furthermore, \citet{Carlini2018SecretSharer} showed that deep neural networks can effectively memorize user-unique training examples, which can be recovered even after only a single epoch of training. As such, in parameter-transfer methods, the meta-learner or any downstream participant can potentially recover data from a previous task.

However, despite these serious risks, privacy-preserving meta-learning has remained largely an unstudied problem. Our work aims 
to address this issue by applying \textit{differential privacy} (DP), a well-established definition of privacy with rich theoretical guarantees and consistent empirical success at preventing leakages of data \citep{Carlini2018SecretSharer, Fredrikson2015ModelInversion, Jayaraman2019}. Crucially, although there are various threat models and degrees of DP one could consider in the meta-learning setting (as we outline in Section~\ref{prelim}), we balance the well-documented trade-off between privacy and model utility by formalizing and focusing on a setting that we call \textit{task-global} DP.  This setting provides a strong privacy guarantee for each task-owner that sharing $\hat{\theta}_t$ with the meta-learner will not reliably reveal anything about specific training examples to \textit{any} downstream agent. 
It also allows us to use the framework of \citet{Khodak2019} to provide a DP GBML algorithm that enjoys provable learning guarantees in convex settings.

Finally, we show an application of our work by drawing connections to federated learning (FL)~\citep{li:19b}. While standard methods for FL, such as FedAvg \citep{McMahan2016communication},  have inspired many works also concerning DP in a multi-user setup  \citep{Agarwal2018cpSGD, Bhowmick2018, geyer2019differentially, McMahan2017DPLanguageModel, Treux}, we are the first to consider task-global DP as a useful variation on standard DP settings.
Moreover, these works fundamentally differ from ours in that they do not consider a task-based notion of learnability, instead focusing on the global federated learning problem to learn a single global model. That being said, a federated setting involving per-user personalization  \citep{Fei2018, Smith2017} is a natural meta-learning application. 

More specifically, our main contributions are:

\begin{enumerate}[noitemsep, leftmargin=*]
    \item We are the first to provide a taxonomy for the different notions of DP possible for meta-learning. In particular, we formalize on a variant we call \textit{task-global} DP, showing and arguing that it adds a useful option to commonly studied settings in terms of trading privacy and accuracy.
    \item We  propose the first DP GBML algorithm, which we construct to satisfy this privacy setting. Further, we show a straightforward extension for obtaining a \textit{group DP} version of our setting to protect multiple samples simultaneously.
    
    \item While our privacy guarantees hold generally, we also prove learning-theoretic results in convex settings. Our learning guarantees scale with task-similarity, as measured by the closeness of the task-specific optimal parameters \citep{Denevi2019,khodak:19b}.
    
    \item  We show that our algorithm, along with its theoretical guarantees, naturally carries over to federated learning with personalization. Compared to previous notions of privacy considered in works for DP federated learning \citep{Agarwal2018cpSGD, Bhowmick2018, geyer2019differentially, McMahan2017DPLanguageModel, Treux}, we are, to the best of our knowledge, the first to simultaneously provide both privacy and learning guarantees. 
    \item Empirically, we demonstrate that our proposed privacy setting allows for strong performance on federated language-modeling and few-shot image classification tasks. For the former, we achieve close to the performance of non-private models and significantly improve upon the performance of models trained with local-DP guarantees, a previously studied  notion that also provides protections against the meta-learner. Our setting reasonably relaxes this latter notion but can achieve roughly $1.7$–$2.3$ times the accuracy on a modified version of the Shakespeare dataset \citep{Caldas2019} and $1.6$–$1.7$ times the accuracy on a modified version of Wiki-3029 \citep{arora:19} across various privacy budgets. For image-classification, we show that we show that we can still retain significant benefits of meta-learning while applying task-global DP on Omniglot \citep{Lake2011OneSL} and Mini-ImageNet \citep{Ravi2017MiniImageNet}.

\end{enumerate}

\subsection{Related Work}\label{related:work}

\textbf{DP Algorithms in Federated Learning Settings. } Works most similar to ours focus on providing DP for federated learning. 
Specifically, \citet{geyer2019differentially} and \citet{McMahan2017DPLanguageModel} apply update clipping and the Gaussian Mechanism to achieve user-level global DP federated learning algorithms for language modeling and image classification tasks respectively. Their methods are shown to only suffer minor drops in accuracy compared to non-private training but they do not consider protections to inferences made by the meta-learner. Alternatively, \citet{Bhowmick2018} does achieve such protection by applying a theoretically rate-optimal local DP mechanism on the $\bar{\theta}_t$'s users send to the meta-learner. However, they sidestep hard minimax rates \citep{Duchi2018} by assuming the central server has limited side-information and allowing for a large privacy budget. In this work, though we achieve a relaxation of the privacy of \citet{Bhowmick2018}, we do not restrict the adversary's power. Finally, \citet{Treux} does consider a setting that coincides with task-global DP, but they focus primarily on the added benefits of applying MPC (see below) rather than studying the merits of the setting in comparison to other potential settings. 



\textbf{Secure Multiparty Computation (MPC). } MPC is a cryptographic technique that allows parties to calculate a function of their inputs while also maintaining the privacy of each individual inputs. In GBML, sets of model updates may come in a batch from multiple tasks, and hence 
MPC can securely aggregate the batch before it is seen by the meta-learner. Though MPC itself gives no DP guarantees against future inference, it can combined with DP to increase privacy. This approach has been studied in the federated setting, e.g. by \citet{Agarwal2018cpSGD}, who apply MPC in the same difficult setting of \citet{Bhowmick2018}, and \citet{Treux}, who apply MPC similarly to a setting analogous to ours. On the other hand, MPC also comes with additional practical challenges such as peer-to-peer communication costs, drop outs, and vulnerability to collaborating participants. As such, combined with its applicability to multiple settings, including ours, we consider MPC to be an orthogonal direction.



\section{Privacy in a Meta-Learning Context} \label{prelim}

In this section, we first formalize the meta-learning setting that we consider. We then describe the various threat models that arise in the GBML setup, before presenting the different DP notions that can be achieved. Finally, we highlight the specific model and type of DP that we analyze.

\subsection{Parameter Transfer Meta-Learning }
\label{sec:GBML}

In parameter transfer meta-learning, we assume that there is a set of learning tasks $t = 1,\dots,T$, each with its corresponding disjoint training set $D_t$. Each $D_t$ contains $m_t$ training examples $\{z_{t,i}\}_{i=1}^{m_t}$ where each $z_{t,i}\in \mathcal{X} \times \mathcal{Y}$. The goal within each task is to learn a function  $f_{\hat{\theta}_t}: \mathcal{X} \rightarrow \mathcal{Y}$ parameterized by $\hat{\theta}_t \in \Theta \subset \mathbb{R}^d$ that performs ``well,'' generally in the sense that it has low within-task population risk in the distributional setting. The meta-learner's goal is to learn an initialization $\phi \in \Theta$ that leads to a well-performing $\hat{\theta}_t$ within-task. In GBML this $\phi$ is learned via an iterative process that alternates between the following two steps: (1) a within-task procedure where a batch of task-owners $B$ receives the current $\phi$ and each $t \in B$ uses $\phi$ as an initialization for running a within-task optimization procedure, obtaining $\bar{\theta}_t(D_t, \phi)$;
(2) a meta-level procedure where the meta-learner receives these model updates $\{\bar{\theta}_t\}_{t \in B}$ and aggregates them to determine an updated $\phi$. Note that we do not assume $\hat{\theta}_t =  \bar{\theta}_t$, as the updates shared for the meta-learning procedure can be obtained from a different procedure than the refined model used for downstream  within-task inference. This is especially the case when concerning the addition of noise for DP as part of the meta-learning procedure.


\subsection{Threat Models for GBML}

As in any privacy endeavor, before discussing particular mechanisms, a key specification must be made in terms of what threat model is being considered. In particular, it must be specified both (1) who the potential adversaries are and (2) what information needs to be protected.

\textbf{Potential adversaries.} For a single task-owner, adversaries may be either solely recipients of $\phi$ (i.e. other task-owners) or recipients of either $\phi$ or $\bar{\theta}_t$ (i.e. also the meta-learner). In the latter case, we consider only a honest-but-curious meta-learner, who does not deviate from the agreed upon algorithm but may try to make inferences from $\bar{\theta}_t$. In both cases, concern is placed not only about these other participants' intentions, but also their own security against access by malicious outsiders. 

\textbf{Data to be protected. } A system can choose either to protect information contained in single records $z_{t,i}$ one-at-a-time or to protect entire datasets $D_t$ simultaneously. This distinction between \textit{record-level} and \textit{task-level} privacy can be practically important. Multiple $z_{t,i}$ within $D_t$ may reveal the same secret (e.g., a cell-phone user has sent their SSN multiple times), or the entire distribution of $D_t$ could reveal sensitive information (e.g., a user has sent all messages in a foreign language). In these cases, record-level privacy may not be sufficient. However, given that privacy and utility are often at odds, we often seek the weakest notion of privacy needed in order to best preserve utility. 

In related work, focus has primarily been placed on task-level protections. However, such approaches usually fall into two extremes, either obtaining strong learning but having to trust the meta-learner \citep{McMahan2017DPLanguageModel,geyer2019differentially} or trusting nobody but also obtaining low performance \citep{Bhowmick2018}. In response, we try to bridge the gap between these threat models by considering a model that makes a relaxation from task-level to record-level privacy but retains protections for each task-owner against \textit{all} other parties. This relaxation can be reasonably justified in practical situations, as while task-level guarantees are strictly stronger, they may also be unnecessary. In particular, record-level guarantees are likely to be sufficient whenever single records each pertain to different individuals. For example, for hospitals, what we care about is providing privacy to the individual patients and not aggregate hospital information. For cell-phones, if one can bound the number of texts that could reveal the \textit{same} sensitive information, then a straightforward extension of our setting and methods, which protects up to $k$ records simultaneously, could also be sufficient.



\begin{figure}[t]
    \centering
    \includegraphics[scale=0.61]{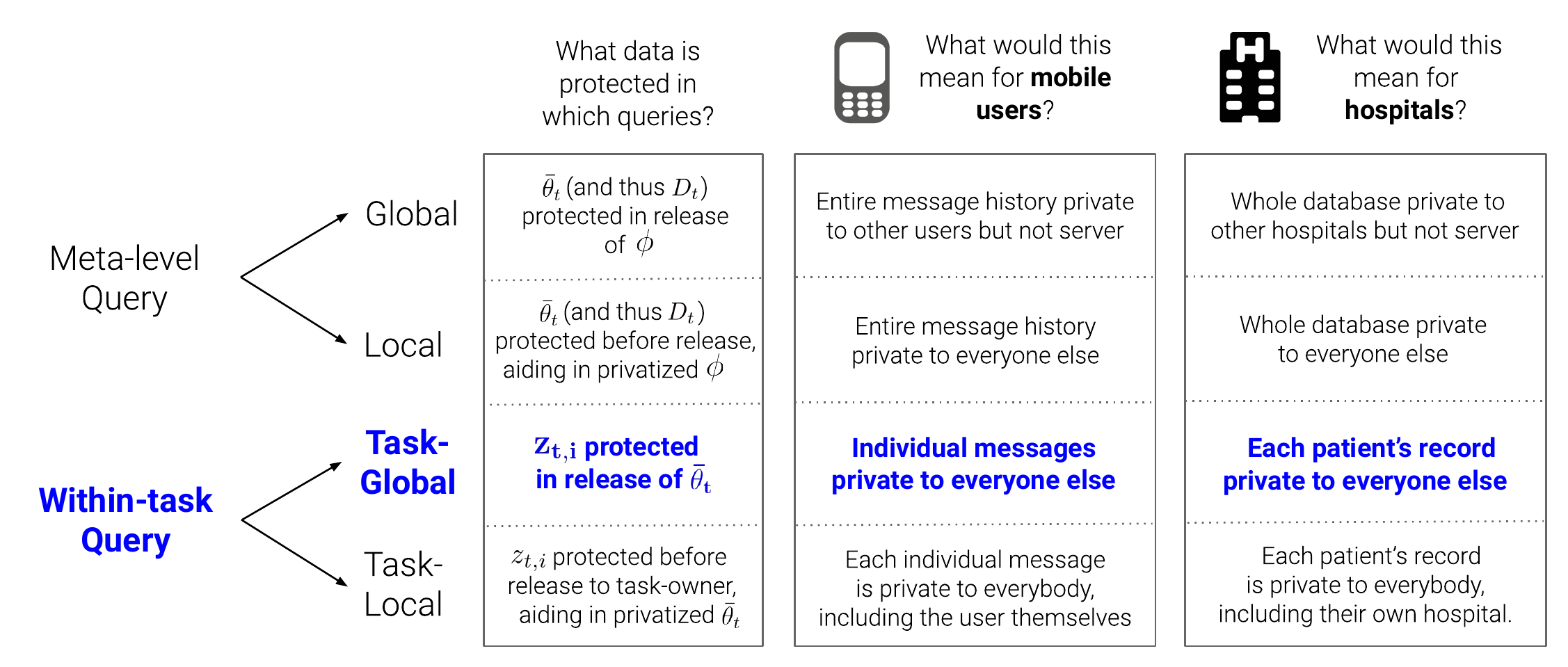}
    \caption{Summary of the privacy protections guaranteed by local and global DP at the different levels of the meta-learning problem (with our notion in blue). On the right, we show what each specification would mean in two practical federated scenarios: mobile users and hospital networks. 
    }
    \label{fig:privacy_notions}
\end{figure}

\subsection{Differential Privacy (DP) in a Single-Task Setting}
In terms of actually achieving privacy guarantees for machine learning, a de-facto standard has been to apply DP, a provision which strongly limits what one can infer about the examples a given model was trained on. Assuming a training set $D = \{z_1, \dots, z_m\}$, two common types of DP are considered.

\textbf{Differential Privacy (Global DP). } A randomized mechanism $\mathcal{M}$ is $(\varepsilon, \delta)$-\textit{differentially private} if for all measurable $\mathcal{S} \subseteq \text{Range}(\mathcal{M})$ and for all datasets $D, D'$ that differ by at most one element:
$$ \mathbb{P}[\mathcal{M}(D) \in \mathcal{S}] \leq e^{\varepsilon} \mathbb{P}[\mathcal{M}(D') \in \mathcal{S}] + \delta $$
If this holds for $D,D'$ differing by at most $k$ elements, then $(\varepsilon, \delta)$ $k$-\textit{group DP} is achieved.

\textbf{Local Differential Privacy. } A randomized mechanism $\mathcal{M}$ is $(\varepsilon, \delta)$-\textit{locally differentially private} if for any two possible training examples $z,z' \in \mathcal{X} \times \mathcal{Y}$ and measurable $\mathcal{S} \subseteq \mathcal{X} \times \mathcal{Y}$: 
$$ \mathbb{P}[\mathcal{M}(z) \in \mathcal{S}] \leq e^{\varepsilon} \mathbb{P}[\mathcal{M}(z') \in \mathcal{S}] + \delta $$

Global DP guarantees the difficulty of inferring the presence of a specific record in the training set by observing $\mathcal{M}(D)$. It assumes a trusted \textit{aggregator} running $\mathcal{M}$ gets direct access to $D$ and privatizes the final output. Meanwhile, local DP assumes more strictly that the aggregator also cannot be trusted, thus requiring a random mechanism to be applied individually on each $z$ \textit{before} training. 
However, it generally results in worse model performance, suffering from hard minimax rates \citep{Duchi2018}. 

\subsection{Differential Privacy for a GBML Setting} \label{sec:notions-DP}

In meta-learning, there exists a hierarchy of agents and statistical queries, so we cannot as simply define global and local DP. Here, both the meta-level sub-procedure, $\{\bar{\theta}_t\}_{t \in B} \rightarrow \phi$, and the within-task sub-procedure, $\{z_{t,i}\}_{i = 1}^{m_t} \rightarrow \bar{\theta}_t$, can be considered individual queries and a DP algorithm can implement either to be DP. Further, for each query, the procedure may be altered to satisfy either local DP or global DP. Thus, there are four fundamental options that follow from standard DP definitions.\vspace{-2mm}
\begin{enumerate}[noitemsep]
    \item[(1)] \textit{Global DP:} Releasing $\phi$ will at no point compromise information regarding any specific $\bar{\theta}_t$.
    \item[(2)] \textit{Local DP:} Additionally, each $\bar{\theta}_t$ is protected from being revealed to the meta-learner. 
    \item[(3)] \textit{Task-Global DP: } Releasing $\bar{\theta}_t$ will at no point compromise any specific $z_{t,i}$. 
    \item[(4)] \textit{Task-Local DP: } Additionally, each  $z_{t,i}$ is protected from being revealed to task-owner.\vspace{-2mm}
\end{enumerate}



To form analogies to single-task DP, the examples in the meta-level procedure are the model updates and the aggregator is the meta-learner. For the within-task procedure, the examples are actually the individual records and the aggregator is the task-owner. As such, (1) is implemented by the meta-learner, (2) and (3) are implemented by the task-owner, and (4) is implemented by record-owners. By immunity to post-processing, the guarantees for (3) and (4) also automatically apply to the release of any future iteration of $\phi$, thus protecting against future task-owners as well. Meanwhile, though (1) and (2) by definition protect the identities of individual $\bar{\theta}_t$, they actually satisfy a task-level threat model by doing so. Intuitively, not being able to reliably infer anything about $\bar{\theta}_t$ implies that nothing can be inferred about the $D_t$ that was used to generate it. 




\begin{table}[]
\centering
\caption{ \label{table:DPcomparison} Broad categorization of the DP settings considered by our work in meta-learning and notable past works in the federated setting. Note that by using a de-centralized method for aggregation, \cite{Agarwal2018cpSGD} can still protect against the meta-learner from making inferences on any individual $\hat{\theta}_t$ with what is only effectively a global DP mechanism. }

\begin{tabular}{@{}cccc@{}}
\toprule
Previous Work   & Notion of DP & Privacy for $\phi$ & Privacy for $\bar{\theta}_t$  \\ \bottomrule           
\citet{McMahan2017DPLanguageModel}  & Global & Task-level & -\\ 
 \citet{geyer2019differentially}  & Global & Task-level & -\\
\citet{Bhowmick2018} & Local, Global  & Task-level & Task-level \\
\citet{Agarwal2018cpSGD} & Global + MPC & Task-level & Task-level  \\
\citet{Treux} & Task-Global + MPC &  Record-level & Record-level\\\bottomrule
Our work &Task-Global & Record-level & Record-level  \\\bottomrule
\end{tabular}

\end{table}


Using the terminology we introduce in Section~\ref{sec:notions-DP}, previous works for DP in federated settings can be categorized as in Table~\ref{table:DPcomparison}. While these works do not assume a multi-task setting, we can still naturally use the terms \textit{global/local} and \textit{task-global/task-local} to analogously refer to releasing the global model (by the central server in the case without MPC) and user-specific updates (by users' devices) respectively. 

\section{Differentially Private Parameter-Transfer} \label{section:diff-priv-ml}

\subsection{Algorithm}

We now present our DP GBML method, which is written out in its online (regret) form in Algorithm~\ref{alg:meta}. Here, we observe that both within-task optimization and meta-optimization are done using some form of gradient descent. The key difference between this algorithm and traditional GBML is that since task-learners must send back privatized model updates, each now applies an DP gradient descent procedure to learn $\bar\theta_t$ when called. 
However, at meta-test time the task-learner will run a {\em non-private} descent algorithm to obtain the parameter $\hat\theta_t$ used for inference, as this parameter may remain locally.
To obtain learning-theoretic guarantees, we use a variant of Algorithm~\ref{alg:meta} in which the DP algorithm is an SGD procedure \citep[Algorithm~1]{bassily:19} that adds a properly scaled Gaussian noise vector at each iteration.

\subsection{Privacy Guarantees}

We run a certified $(\varepsilon,\delta)$-DP version of SGD \citep[Algorithm~1]{bassily:19} within each task.
Therefore, this guarantees that the contribution of each task-owner, a $\bar\theta_t$ trained on their data, carries global DP guarantees with respect to the meta-learner.
Additionally, since DP is preserved under post-processing, the release of any future calculation stemming from $\bar\theta_t$ also carries the same DP guarantee. 

\begin{algorithm}[!t]
	\DontPrintSemicolon
	Meta-learner picks first meta-initialization $\phi_1\in\Theta$.\\
	\For{{\em task} $t\in[T]$}{
		Meta-learner sends meta-initialization $\phi_t$ to task $t$.\\
		Task-learner runs OGD starting from $\theta_{t,1}=\phi_t$ on losses $\{\ell_{t,i}\}_{i=1}^m$ to obtain $\hat\theta_t$.\\
		Task-learner $t$ runs $(\varepsilon,\delta)$-DP algorithm (noisy-SGD) on losses $\{\ell_{t,i}\}_{i=1}^m$ to get $\bar\theta_t$.\\
		Task-learner sends $\bar\theta_t$ to meta-learner.\\
		Meta-learner constructs loss $\ell_t(\phi)=\frac12\|\bar\theta_t-\phi_t\|_2^2$.\\
		Meta-learner updates meta-initialization $\phi_{t+1}$ using an OCO algorithm on $\ell_1,\dots,\ell_t$.
	}
	\KwResult{Meta-initialization $\hat\phi=\frac1T\sum_{t=1}^T\phi_t$ to use on test tasks.}
	\caption{\label{alg:meta}
		Online version of our $(\varepsilon,\delta)$-meta-private parameter-transfer algorithm.
	}
\end{algorithm} 

\subsection{Learning Guarantees} \label{sec:dp-learning-guarantees}

Our learning result follows the setup of \citet{baxter:00}, who formalized the LTL problem as using task-distribution samples $\mathcal P_1,\dots,\mathcal P_T\sim\mathcal Q$ from some meta-distribution $\mathcal Q$ and samples indexed by $i=1,\dots,m$ from those tasks to improve performance when a new task $\mathcal P$ is sampled from $\mathcal Q$ and we draw $m$ samples from it.
In the setting of parameter-transfer meta-learning we are learning functions parameterized by real-valued vectors $\theta\in\Theta\subset\mathbb R^d$, so our goal will follow that of \citet{Denevi2019} and \citet{khodak:19b} in seeking bounds on the transfer-risk -- the distributional performance of a learned parameter on a new task from $\mathcal Q$ -- that improve with task similarity.

The specific task-similarity metric we consider is the average deviation of the risk-minimizing parameters of tasks sampled from the distribution $\mathcal Q$ are close together.
This will be measured in-terms of the following quantity: $V^2=\min_{\phi\in\Theta}\frac12\mathbb E_{\mathcal P\sim\mathcal Q}\|\theta_{\mathcal P}-\phi\|_2^2$, for $\theta_P\in\argmin_{\theta\in\Theta}\ell_{\mathcal P}(\theta)$ a risk-minimizer of task-distribution $\mathcal P$.
This quantity is roughly the variance of risk-minimizing task-parameters and is a standard quantifier of improvement due to meta-learning \citep{Denevi2019,khodak:19b}.
For example, \citet{Denevi2019} show excess transfer-risk guarantees of the form $\mathcal O\left(\frac V{\sqrt m}+\sqrt{\frac{\log T}T}\right)$ when $T$ tasks with $m$ samples are drawn from the distribution.
This guarantee ensures that as we see more tasks our transfer risk becomes roughly $V/\sqrt m$, which if the tasks are similar, i.e. $V$ is small, implies that LTL improves over single-task learning.

In Algorithm~\ref{alg:meta}, each user $t$ obtains a within-task parameter $\bar\theta_t$ by running (non-private) OGD on a sequence of losses $\ell_{t,1},\dots,\ell_{t,m}$ and averaging the iterates.
The regret of this procedure, when averaged across the users, implies a  bound on the expected excess transfer risk of new task from $\mathcal Q$ when running OGD from a learned initialization \citep{cesa-bianchi:04}.
Thus our goal is to bound this regret in terms of $V$;
here we follow the Average Regret-Upper-Bound Analysis (ARUBA) framework of \citet{khodak:19b} and treat meta-update procedure itself as an online algorithm optimizing a bound on the performance measure (regret) of each within-task algorithm.
As OGD's regret depends on the squared distance $\frac12\|\theta_t^\ast-\phi_t\|_2^2$ of the optimal parameter from the initialization $\phi_t$, with no privacy concerns one could simply update $\phi_t$ using $\theta_t^\ast\in\argmin_{\theta\in\Theta}\sum_{i=1}^m\ell_{t,i}(\theta)$ to recover guarantees similar to those in \citet{Denevi2019} and \citet{khodak:19b}.

However, this approach requires sending $\theta_t^\ast$ to the meta-learner, which is not private;
instead in Algorithm~\ref{alg:meta} we send $\hat\theta_t$, which is the output of noisy SGD.
To apply ARUBA, we need an additional assumption -- that the losses satisfy the following quadratic growth (QG) property: for some $\alpha>0$,
\begin{equation}\label{eq:qg}
\frac\alpha2\|\theta-\theta_{\mathcal P}\|_2^2\le\ell_{\mathcal P}(\theta)-\ell_{\mathcal P}(\theta_{\mathcal P})\quad\forall~\theta\in\Theta
\end{equation}
Here $\theta_{\mathcal P}$ is the risk minimizer of $\ell_{\mathcal P}$.
This assumption, which \citet{Khodak2019} shows is reasonable in settings such as logistic regression, amounts to a statistical non-degeneracy assumption on the parameter-space -- that parameters far away from the risk-minimizer do not have low-risk.
Note that assuming the population risk is QG is significantly weaker than assuming strong convexity of the empirical risk, which previous work \citep{finn:19} has assumed to hold for task losses but does not hold for applicable cases such as few-shot least-squares or logistic regression if the number of task-samples is smaller than the data-dimension.

We are now able to state our main theoretical result, a proof of which is given in Appendix~\ref{app:sec:proofs}.
The result follows from a bound on the task-averaged regret across all tasks of a simple online meta-learning procedure that treats the update $\bar\theta_t$ sent by each task as an approximation of the optimal parameter in hindsight $\theta_t^\ast$.
Since this parameter determines regret on that task, by reducing the meta-update procedure to OCO on this sequence of functions in a manner similar to \citep{Khodak2019}, we are able to show a task-similarity-dependent bound.
Following this the statistical guarantee stems from a nested online-to-batch conversion, a standard procedure to convert low-regret online-learning algorithms to low-risk distribution-learning algorithms \citep{cesa-bianchi:04}.

\begin{Thm}\label{thm:dpgbml}
    Suppose $\mathcal Q$ is a distribution over task-distributions $\mathcal P$ over $G$-Lipschtz, $\beta$-strongly-smooth, 1-bounded convex loss functions $\ell:\Theta\mapsto\mathbb R$ over parameter space $\Theta$ with diameter $D$ for $\beta\le\frac GD\min\left(\sqrt{\frac m2},\frac{\varepsilon m}{2\sqrt{2d\log\frac1\delta}}\right)$, and let each $\mathcal P$ satisfy the quadratic growth property \eqref{eq:qg}.
    Suppose the distribution $\mathcal P_t$ of each task is sampled i.i.d. from $\mathcal Q$ and we run Algorithm~\ref{alg:meta} with the $(\varepsilon,\delta)$-DP procedure of \citet[Algorithm~1]{bassily:19} to obtain $\bar\theta_t$ as the average iterate for the meta-update step, using $n\ge1$ steps and learning rate $\frac\gamma{G\sqrt n}$ for $\gamma>0$.
    Letting $V^2=\min_{\phi\in\Theta}\frac12\mathbb E_{\mathcal P\sim\mathcal Q}\|\theta_{\mathcal P}-\phi\|_2^2$, there exist settings of $n,\gamma,\eta$ such that we have the following bound on the expected transfer risk when a new task $\mathcal P$ is sampled from $\mathcal Q$, $m$ samples are drawn i.i.d. from $\mathcal P$, and we run OGD with learning rate $\eta$ starting from $\hat\phi=\frac1T\sum_{t=1}^T\phi_t$ and use the average $\hat\theta$ of the resulting iterates as the learned parameter:
	$$\E\E_{\mathcal P\sim\mathcal Q}\E_{\ell\sim\mathcal P}\ell(\hat\theta)\le\E_{\mathcal P\sim\mathcal Q}\E_{\ell\sim\mathcal P}\ell(\theta^\ast)+\tilde{\mathcal O}\left(\frac V{\sqrt m}+\frac{\alpha D^2}T+\frac1\alpha\max\left(\frac{d\log\frac1\delta}{\varepsilon^2m^2},\frac1m\right)\right)$$
	Here $\theta^\ast$ is any element of $\Theta$ and the outer expectation is taken over $\ell_{t,i}\sim\mathcal P_t\sim\mathcal Q$ and the randomness of the within-task DP mechanism.
	Note that this procedure is $(\varepsilon,\delta)$-DP.
\end{Thm}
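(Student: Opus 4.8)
The plan is to run the Average Regret-Upper-Bound Analysis (ARUBA) of \citet{khodak:19b} on top of the differentially private stochastic convex optimization (DP-SCO) guarantee of \citet[Algorithm~1]{bassily:19}, and then convert the resulting task-averaged regret into a transfer-risk bound through a two-level online-to-batch argument \citep{cesa-bianchi:04}. I would first isolate the three sources of excess risk appearing in the statement: the intrinsic within-task difficulty (which produces the $V/\sqrt m$ term), the meta-level regret accrued while the initialization is still being learned (the $\alpha D^2/T$ term), and the degradation caused by replacing the in-hindsight optimum $\theta_t^\ast$ with the privatized iterate $\bar\theta_t$ that is actually fed to the meta-learner (the $\frac1\alpha\max(\cdot)$ term).

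For the within-task and test-time analysis, I would use the standard fact that OGD from initialization $\phi$ with learning rate $\eta$ on $m$ $G$-Lipschitz convex losses has average-iterate regret at most $\frac1{2\eta}\|\theta^\ast-\phi\|_2^2+\frac{\eta G^2m}2$ against any fixed comparator, so online-to-batch bounds the expected within-task excess risk of the averaged iterate $\hat\theta$ by $\frac1{2\eta m}\E_{\mathcal P}\|\theta_{\mathcal P}-\phi\|_2^2+\frac{\eta G^2}2$. Choosing $\phi$ to be the minimizer of $\frac12\E_{\mathcal P}\|\theta_{\mathcal P}-\phi\|_2^2=V^2$ and tuning $\eta$ gives the oracle rate $\tilde{\mathcal O}(V/\sqrt m)$; it then remains to show that the online-learned $\hat\phi$ competes with this oracle, which is the role of the meta-level argument.

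For the meta level, the key ARUBA observation is that the within-task regret-upper-bound, \emph{as a function of the initialization}, is (up to the constant $\frac{\eta G^2m}2$) the strongly convex quadratic $\frac1{2\eta}\|\theta_t^\ast-\phi\|_2^2$. Since the meta-learner cannot access $\theta_t^\ast$, I would have it run OCO on the surrogate losses $\hat\ell_t(\phi)=\frac12\|\bar\theta_t-\phi\|_2^2$ of Algorithm~\ref{alg:meta}, which are $1$-strongly convex, so a strongly convex OCO procedure attains $\mathcal O(\log T)$ regret and hence task-averaged regret $\tilde{\mathcal O}(1/T)$ against the best fixed initialization. I would then split $\frac1T\sum_t\frac12\|\theta_t^\ast-\phi_t\|_2^2$ into this surrogate regret, the value $\min_\phi\frac1T\sum_t\hat\ell_t(\phi)$ (which concentrates to $V^2$ after relating empirical to population minimizers), and a correction $\frac1T\sum_t(\frac12\|\theta_t^\ast-\phi_t\|_2^2-\hat\ell_t(\phi_t))$ that is controlled by $\E\|\bar\theta_t-\theta_t^\ast\|_2^2$.

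The crux — and the step I expect to be the main obstacle — is bounding this privacy correction $\E\|\bar\theta_t-\theta_t^\ast\|_2^2$ and threading it through both online-to-batch conversions without losing the correct dependence on $\alpha$, $\varepsilon$, $\delta$, and $m$. Here I would invoke the quadratic-growth property \eqref{eq:qg}, which yields $\E\|\bar\theta_t-\theta_t^\ast\|_2^2\le\frac2\alpha\,\E[\ell(\bar\theta_t)-\ell(\theta_t^\ast)]$, and then bound the right-hand side by the DP-SCO excess-risk guarantee of \citet{bassily:19} for the $n$-step noisy SGD run with learning rate $\frac\gamma{G\sqrt n}$. The hypothesis $\beta\le\frac GD\min(\sqrt{m/2},\frac{\varepsilon m}{2\sqrt{2d\log(1/\delta)}})$ is exactly what forces the $\beta$-smoothness contribution to that excess-risk bound to be dominated by the two $G$-Lipschitz terms $\frac1{\sqrt m}$ and $\frac{\sqrt{d\log(1/\delta)}}{\varepsilon m}$, so that after the QG conversion and the final tuning of $n,\gamma,\eta$ the privacy penalty collapses to $\frac1\alpha\max(\frac{d\log(1/\delta)}{\varepsilon^2m^2},\frac1m)$. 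Assembling the within-task, meta-level, and privacy pieces, optimizing the free parameters to trade the $\alpha D^2/T$ meta-regret against the $\frac1\alpha$-scaled privacy term, and noting that $(\varepsilon,\delta)$-DP is inherited from \citet{bassily:19} and preserved under post-processing, completes the argument.
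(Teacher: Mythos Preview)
Your overall plan matches the paper's: ARUBA-style reduction of the meta-update to OCO on the $1$-strongly-convex surrogates $\frac12\|\bar\theta_t-\phi\|_2^2$ (giving $O(\log T)$ regret), QG to convert DP-SCO excess risk into a distance bound on $\|\bar\theta_t-\theta_t^\ast\|_2$, and a nested online-to-batch conversion for the final transfer-risk statement. The three sources of error you isolate are exactly the ones the paper tracks.

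There is, however, one step you gloss over that is the actual crux and that, handled as you describe, will not give the stated rate. You write that the correction is ``controlled by $\E\|\bar\theta_t-\theta_t^\ast\|_2^2$'' and then bound this via QG plus ``the DP-SCO excess-risk guarantee of \citet{bassily:19},'' treating the result as a fixed number to be plugged in. But with learning rate $\frac\gamma{G\sqrt n}$ left free, the noisy-SGD guarantee (the paper's Lemma~\ref{app:lem:noisysgd}) has the form
\[
\E\ell_t(\bar\theta_t)-\ell_t(\theta_t^\ast)\ \lesssim\ G\Bigl(\tfrac{\|\phi_t-\theta_t^\ast\|_2^2}{\gamma}+\gamma\Bigr)\max\Bigl(\tfrac{\sqrt{d\log\frac1\delta}}{\varepsilon m},\tfrac1{\sqrt m}\Bigr),
\]
which still depends on the very quantity $\|\phi_t-\theta_t^\ast\|_2^2$ you are trying to bound. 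The paper resolves this circularity by choosing $\gamma\propto\frac G\alpha\max(\cdot)$ so that, after applying QG, the induced coefficient of $\sum_t\|\phi_t-\theta_t^\ast\|_2^2$ on the right-hand side is $\frac12$ and can be absorbed into the left; it is this self-referential absorption that squares the rate and yields $\frac1\alpha\max\bigl(\frac{d\log\frac1\delta}{\varepsilon^2m^2},\frac1m\bigr)$. If instead you invoke the already-tuned black-box bound $O\bigl(GD\max(\cdot)\bigr)$, QG gives $\|\bar\theta_t-\theta_t^\ast\|_2^2\le\frac{2GD}{\alpha}\max\bigl(\frac1{\sqrt m},\frac{\sqrt{d\log\frac1\delta}}{\varepsilon m}\bigr)$, and after dividing by $2\eta m$ with the paper's $\eta$ you pick up a $\Theta(D/\sqrt m)$ term with no $V$ in front --- erasing the meta-learning improvement. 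So the ``tuning of $\gamma$'' you mention has to be done \emph{inside} the inequality for $\sum_t\|\phi_t-\theta_t^\ast\|_2^2$, not as a post-hoc optimization of a closed-form bound.

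A small side remark: $\alpha$ is the fixed QG constant of the task risks, not a free parameter, so there is no ``trading off'' of $\alpha D^2/T$ against the $\frac1\alpha$ term. Both coefficients emerge from the single choice $\eta=\frac{V+1/(\alpha\sqrt m)}{G\sqrt m}$, whose $1/(\alpha\sqrt m)$ floor is precisely what lets $\frac1{2\eta m}\le\frac{\alpha G}{2}$ and produces the $\alpha D^2/T$ and $\frac1\alpha\max(\cdot)$ scalings simultaneously.
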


Theorem~\ref{thm:dpgbml} shows that one can usefully run a DP-algorithm as the within-task method in meta-learning and still obtain improvement due to task-similarity.
Specifically, the standard term of $1/\sqrt m$ is multiplied by $V$, which is small if the tasks are related via the closeness of their risk minimizers.
Thus we can use meta-learning to improve within-task performance relative to single-task learning.
We also obtain a very fast convergence of $\tilde{\mathcal O}(1/T)$ in the number of tasks.
However, we do gain some $O(1/m)$ terms due to the quadratic growth approximation and the privacy mechanism.
Note that the assumption that both the functions and its gradients are Lipschitz-continuous are standard and required by the noisy SGD procedure of \citet{bassily:19}.

This theorem also gives us a relatively straightforward extension if the desire is to provide $(\varepsilon, \delta)$-group-DP. Since any privacy mechanism that provides $(\varepsilon,\delta)$-DP also provides $(k\varepsilon,ke^{(k-1)\epsilon}\delta)$-DP guarantees for groups of size $k$ \citep{Dwork2014}, we immediately have the following corollary.

\begin{Cor} Under the same assumptions and setting as Theorem \ref{thm:dpgbml}, achieving $(\varepsilon,\delta)$-group DP is possible with the same guarantee except replacing $\frac{d\log\frac1\delta}{\varepsilon^2}$ with $\frac{k^3d}{\varepsilon} + \frac{k^2d}{\varepsilon}\left[ \frac{1}{\varepsilon} \log{\frac{k}{\delta} - 1}\right]$

\end{Cor}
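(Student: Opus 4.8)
The plan is to obtain group-DP as a free consequence of the per-record guarantee that Theorem~\ref{thm:dpgbml} already certifies, and then re-tune the per-record privacy budget so that the induced group guarantee equals the desired $(\varepsilon,\delta)$. I would start from the group-privacy lemma quoted just before the corollary: any base $(\varepsilon_0,\delta_0)$-DP mechanism is $(k\varepsilon_0,\,k e^{(k-1)\varepsilon_0}\delta_0)$-DP for groups of size $k$ \citep{Dwork2014}. Since running the noisy-SGD of \citet[Algorithm~1]{bassily:19} within each task makes the contribution $\bar\theta_t$ a per-record $(\varepsilon_0,\delta_0)$-DP object, this lemma immediately upgrades that to a group guarantee, so no new privacy argument is needed --- only a choice of $(\varepsilon_0,\delta_0)$.

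Inverting the group relationship fixes that choice. Matching the group parameters to $(\varepsilon,\delta)$ forces $k\varepsilon_0=\varepsilon$, hence $\varepsilon_0=\varepsilon/k$, and $k e^{(k-1)\varepsilon_0}\delta_0=\delta$, hence $\delta_0=\tfrac\delta k e^{-(k-1)\varepsilon_0}$. Running the within-task mechanism at this strictly stronger level $(\varepsilon_0,\delta_0)$ lets the entire conclusion of Theorem~\ref{thm:dpgbml} carry over verbatim with $\varepsilon_0,\delta_0$ in place of $\varepsilon,\delta$. The only privacy-dependent quantity in the transfer-risk bound is the factor $\tfrac{d\log(1/\delta)}{\varepsilon^2}$ inside the $\max$; the $V/\sqrt m$, $\alpha D^2/T$, and $1/m$ terms have no $\varepsilon,\delta$ dependence and are untouched. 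Substituting $\varepsilon_0^2=\varepsilon^2/k^2$ and $\log(1/\delta_0)=\log(k/\delta)+(k-1)\varepsilon_0$ turns that factor into $\tfrac{k^2d}{\varepsilon^2}\big(\log\tfrac k\delta+(k-1)\varepsilon_0\big)$, and collecting the logarithmic and linear-in-$\varepsilon$ pieces gives a bound of the stated form.

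I expect the main effort to be bookkeeping rather than conceptual. The two points to get right are inverting the group map correctly --- carrying the $e^{-(k-1)\varepsilon_0}$ factor through when solving for $\delta_0$, so that it reappears as an additive linear-in-$\varepsilon$ contribution once divided by $\varepsilon_0^2$ --- and checking that the smoothness hypothesis $\beta\le\tfrac GD\min\!\big(\sqrt{m/2},\,\varepsilon m/(2\sqrt{2d\log(1/\delta)})\big)$ of Theorem~\ref{thm:dpgbml} remains satisfiable after the shrink $\varepsilon\mapsto\varepsilon/k$, $\delta\mapsto\delta_0$, since its privacy branch tightens by about a factor of $k$. No fresh concentration or regret analysis is required, because Theorem~\ref{thm:dpgbml} is invoked as a black box at the rescaled budget.
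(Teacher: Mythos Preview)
Your proposal is correct and follows exactly the approach the paper uses: the paper's entire argument is the single sentence preceding the corollary, namely that any $(\varepsilon_0,\delta_0)$-DP mechanism is automatically $(k\varepsilon_0,ke^{(k-1)\varepsilon_0}\delta_0)$-group-DP, so one inverts this relation and re-invokes Theorem~\ref{thm:dpgbml} at the tighter per-record budget. One small remark: carrying your own substitution through exactly yields $\tfrac{k(k-1)d}{\varepsilon}+\tfrac{k^2d}{\varepsilon^2}\log\tfrac{k}{\delta}$, whereas the corollary's displayed expression equals $\tfrac{k^2(k-1)d}{\varepsilon}+\tfrac{k^2d}{\varepsilon^2}\log\tfrac{k}{\delta}$ --- the paper appears to have used $(k-1)\varepsilon$ rather than $(k-1)\varepsilon_0=(k-1)\varepsilon/k$ in the exponent when solving for $\delta_0$, which is a harmless loosening since your tighter quantity is bounded above by theirs.
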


For constant $k$, this allows us to enjoy the stronger guarantee while maintaining largely the same learning rates. This is a useful result given that in some settings, it may be desired to simultaneously protect small groups of size $k << m_t$, such as protecting entire families for hospital records.

\section{Empirical Results} \label{experiments}
We present results that show it is possible to learn useful deep models in federated scenarios while still preserving privacy against all other participants. Specifically, we evaluate the performance of models that have been trained with a \textit{task-global} DP algorithm in comparison to models that have been trained both non-privately and with \textit{local} DP algorithms. We evaluate performance on federated language modeling  and few-shot image classification, applying a practical batched variant of Algorithm~\ref{alg:meta}. 


\begin{figure}[t]
    \centering \includegraphics[width=0.60\textwidth]{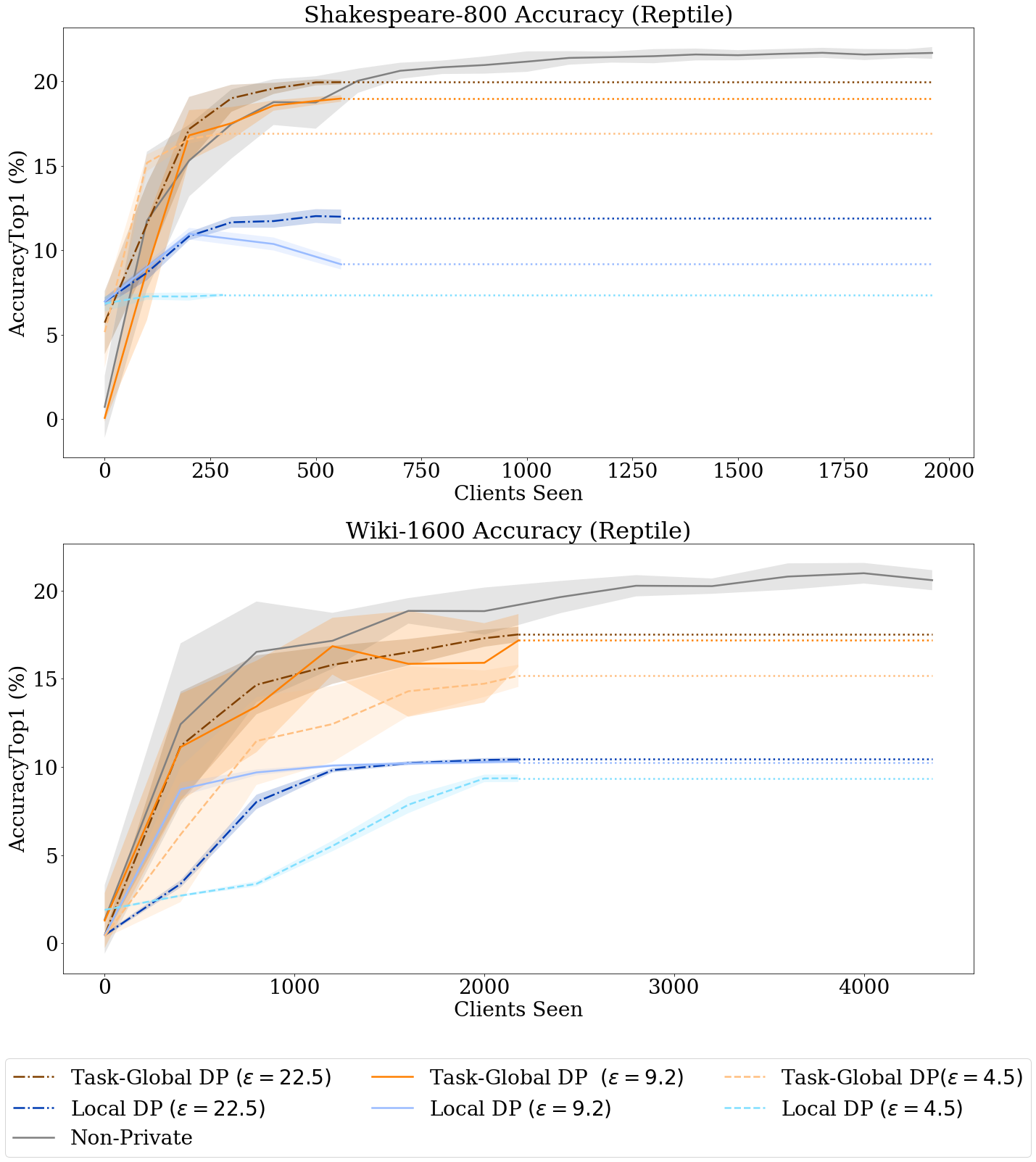}
    \caption{Performance of different versions of Reptile on a next-word-prediction task for two federated datasets. We report the test accuracy on unseen tasks and repeat each experiment 10 times. Solid lines correspond to means, colored bands indicate 1 standard deviation, and dotted lines are for comparing final accuracies (private algorithms can only be trained until privacy budget is met).}
    \label{fig:nextword}
\end{figure} 
\begin{figure}[t]
    \centering
    
    \includegraphics[scale=0.22]{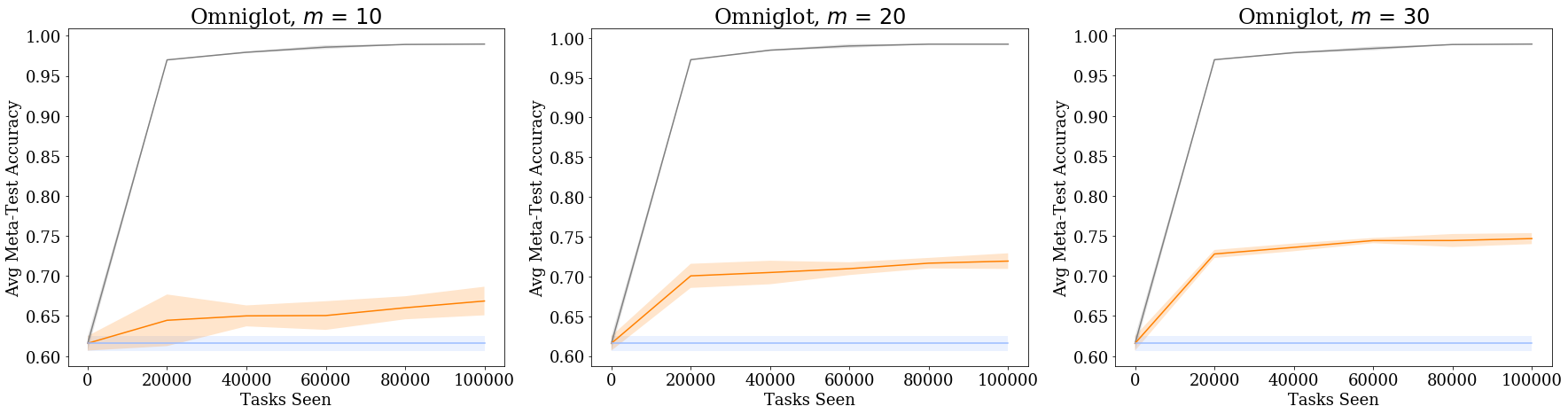}
    
    \includegraphics[scale=0.22] {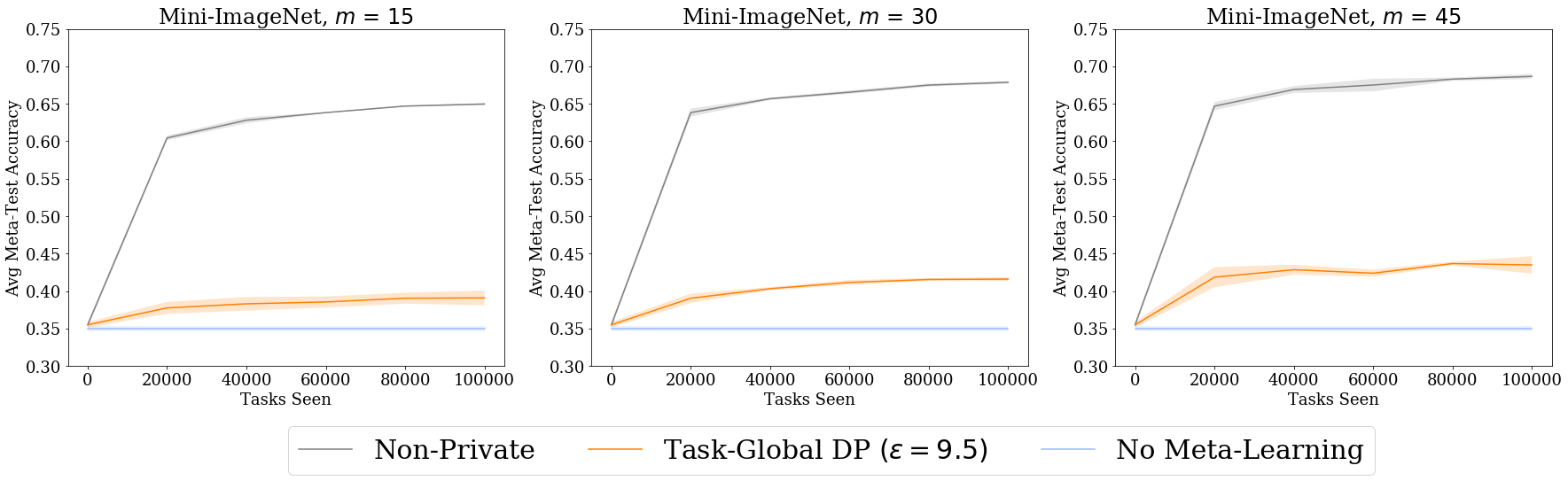}
    
    \caption{Performance of task-global DP Reptile on 5-shot-5-way Omniglot and Mini-ImageNet. $10^5$ sampled test- tasks were used for evaluation and experiments were repeated $3$ times. We do not show a line for Local DP since all hyperparameter settings tried for Local DP resulted in worse performance than the ``No Meta-Learning" baseline, whose performance can always be recovered.}
    \label{fig:fewshot}
\end{figure}

\paragraph{Datasets:} We train a LSTM-RNN for next word prediction on two federated datasets: (1) The Shakespeare dataset as preprocessed by~\citep{Caldas2019}, and (2) a dataset constructed from $3,000$ Wikipedia articles drawn from the Wiki-3029 dataset \citep{arora:19}, where each article is used as a different task. For each dataset, we set a fixed number of tokens per task, discard tasks with fewer tokens than the specified, and discard samples from those tasks with more. We set the number of tokens per task to $800$ for Shakespeare and to $1,600$ for Wikipedia, divide tokens into sequences of length $10$, and we refer to these modified datasets as Shakespeare-800 and Wiki-1600. 

For few-shot image classification, we use the Omniglot \citep{Lake2011OneSL} and Mini-ImageNet \citep{Ravi2017MiniImageNet} datasets, both with $5$-shot-$5$-way test tasks. As has been done for non-private Reptile \citep{Nichol2018Reptile}, we use more training shots at meta-training (trying $m=10, 20, 30$ for Omniglot and $m=15,30,45$ for Mini-ImageNet) than at meta-test time. Though tasks could be sampled indefinitely, we set a fixed budget of tasks at $T=10^6$ to reflect to a setting in which a finite number of training tasks constrains our learning and privacy trade-off. For both local and task-global DP, the more tasks that can be grouped in a meta-batch means that less total noise can be added at each round. However, this also means fewer iterations can be taken. We note that for the smallest values of $m$, this setting is enough for non-private training to essentially achieve the reported final accuracies from \cite{Nichol2018Reptile}.



\paragraph{Meta Learning Algorithm.} We study the performance of our method when applied to the batched version of Reptile~\citep{Nichol2018Reptile} (which, in our setup, reduces to personalized Federated Averaging when the meta-learning rate is set to $1.0$). For the language modelling tasks, 
we tune various configurations of task batch size for all methods. We also allow for multiple visits per client, though at the cost of more added noise per iteration for the private methods. Additionally, for language modeling, we implement gradient clipping and exponential decay on the meta learning rate. For Omniglot and Mini-ImageNet, we use largely the same parameters as \cite{Nichol2018Reptile} but we tune the parameters most directly related to privacy: the $L_2$ clipping threshold, the Adam Learning Rate at meta-training time, the meta-batch size, and the within-task batch size. We defer a more complete discussion of hyperparameter tuning to Appendix \ref{appendix:experiments}.

\paragraph{Privacy Considerations.} For the \textit{task-global} DP models, we set $\delta=10^{-3} < \frac{1}{m^{1.1}}$ by convention on each task and we implement DP-SGD (for language modelling) and DP-Adam (for image-classification) 
within-task using the tools provided by \textit{TensorFlow Privacy}\footnote{https://github.com/tensorflow/privacy}, using the \textit{RDP accountant} to track our privacy budgets. Although these algorithms differ from the one presented in Section~\ref{section:diff-priv-ml}, they still let us realistically explore the efficacy of considering \textit{task-global} privacy. For the language modeling datasets, we try three different privacy budgets (as determined relative to each other by successively doubling the amount of noise added when the goal is to take 1 full gradient step per task) and make sure that all training tasks are sampled without replacement with a fixed batch size until all are seen. This is necessary since multiple visits to a single client results in degradation of the privacy guarantee for that client. We instead aim to provide the same guarantee for each client. For local-DP, though this notion of DP is stronger, we explore the same privacy budgets so as to obtain guarantees that are of the same \textit{confidence}. Here, we essentially run the DP-FedAvg algorithm from \citep{McMahan2017DPLanguageModel} with some key changes. First, to get local DP instead of global, we add Gaussian noise to each clipped set of model updates before returning them to the central server instead of after aggregation. Second, while additional gradient steps within-task do not increase the amount of noise required,  we do again iterate through tasks without replacement. Unlike for global DP, we cannot hope to have any privacy boosts due to sub-sampling if the meta-learner knows who it is communicating with. 


\paragraph{Results.}
Figure~\ref{fig:nextword}
shows the performance of both the non-private and \textit{task-global} private versions of Reptile~\citep{Nichol2018Reptile} for the language modelling tasks across three different privacy budgets. As expected, neither private algorithm reaches the same accuracy of the non-private version of the algorithm. Nonetheless, the task-global version still comes within $78\%,88\%, \text{and } 92\%$ of the non-private accuracy for Shakespeare-800 and within $72\%, 82\%, \text{and } 83\%$ for Wiki-1600. Meanwhile achieving local DP results in only about $55\%$ and $50\%$ of the non-private accuracy on both datasets for the \textit{most} generous privacy budget. In practice, these differences can be toggled by further changing the privacy budget or continuing to trade off more training iterations for larger noise multipliers.

We display results for few-shot image classification on Omniglot and Mini-ImageNet in Figure~\ref{fig:fewshot}.
In this setting, {\em not} applying meta-learning results in meta-test accuracies of around $62\%$ and $36\%$, respectively.
Thus, while performance is indeed lower than non-private learning, applying task-global DP \textit{does} result in  meta-learning benefits for test-time tasks. In settings where privacy is a concern, this increase in performance is still significantly advantageous for the “task-owners”– test-time tasks (who hold less data). On average, they are able to obtain better models and are still guaranteed privacy at a single-digit $\varepsilon$. Intuitively, larger training-task datasets make it easier to apply privacy within-task, and in accordance with our learning guarantees, adding training shots indeed closes the gap in performance between task-global DP Reptile and non-private Reptile. In comparison, applying local-DP for a similar hyperparameter range consistently decreases performance at test-time. However, the no-meta-learning baseline is a theoretical lower bound for local-DP, as one could set the clipping threshold or meta-learning rate close to $0$ to recover the effects of no meta-learning.

\section{Conclusions}
In this work, we have outlined and studied the issue of privacy in the context of meta-learning. Focusing on the class of gradient-based parameter-transfer methods, we used differential privacy to address the privacy risks posed to task-owners by sharing task-specific models with a central meta-learner. To do so, we formalized and considered the notion of \textit{task-global} differential privacy, which guarantees that individual examples from the tasks are protected from all downstream agents (and particularly the meta-learner). Working in this privacy model, we developed a differentially private algorithm that guarantees both this protection as well as learning-theoretic results in the convex setting. Finally, we demonstrate how this notion of privacy can translate into useful deep learning models for non-convex language modelling and image-classification tasks.

\section*{Acknowledgments}
This work was supported in part by DARPA FA875017C0141, the National Science Foundation grants IIS1618714, IIS1705121, and IIS1838017, an Okawa Grant, a Google Faculty Award, an Amazon Web Services Award, a JP Morgan A.I. Research Faculty Award, and a Carnegie Bosch Institute Research Award. Any opinions, findings and conclusions or recommendations expressed in this material are those of the author(s) and do not necessarily reflect the views of DARPA, the National Science Foundation, or any other funding agency.


\bibliography{ref}
\bibliographystyle{plain}

\newpage
\appendix

\newpage
\section{Proofs of Learning Guarantees}\label{app:sec:proofs}

\begin{Set}\label{app:set:theory}
	We assume that at each time-step $t$ an adversary chooses a task-distribution $\mathcal P_t$ over loss-functions on $\Theta\subset\R^d$ and samples $m$ loss functions $\ell_{t,i}$ for $i\in[m]$.
	At each time-step $t$ the task-learner receives a parameter $\phi_t$ from the meta-learner, runs online gradient descent with step-size $\eta>0$ starting from $\phi_t$, and uses the average iterate $\hat\theta_t$ as its learned parameter.
	The task-learner also runs Algorithm~1 of \citet{bassily:19} for $n=\min\left\{\frac m8,\frac{\varepsilon^2m^2}{32d\log\frac1\delta}\right\}$ steps with learning rate $\frac\gamma{G\sqrt n}>0$ on these loss functions and sends the result $\bar\theta_t$ to the meta-learner.
	The meta-learner updates $\phi_{t+1}=(1-1/t)\phi_t+\bar\theta_t/t$.
	We assume all loss functions are $G$-Lipschitz w.r.t. $\|\cdot\|_2$ and $\beta$-strongly-smooth w.r.t. $\|\cdot\|_2$ for some $\beta\le\frac GD\min\left\{\sqrt{\frac m2},\frac{\varepsilon n}{2\sqrt{2d\log\frac1\delta}}\right\}$, where $D$ is the diameter of $\Theta$.
	For each distribution $\mathcal P_t$ let $\ell_t(\theta)=\E_{\ell\sim\mathcal P_t}(\theta)$ be its population risk, $\hat\ell_t(\theta)=\frac1m\sum_{i=1}^m\ell_{t,i}(\theta)$ be its empirical risk, and $\theta_t^\ast\in\argmin_{\theta\in\Theta}\ell_t(\theta)$ be the closest population risk minimizer to $\bar\theta_t$.
\end{Set}

\begin{Lem}\label{app:lem:noisysgd}
	In Setting~\ref{app:set:theory} we have 
	$$
	\E\ell_t(\hat\theta_t)-\ell_t(\theta_t^\ast)
	\le5G\left(\frac{\|\phi_t-\theta_t^\ast\|_2^2}\gamma+\gamma\right)\max\left\{\frac{\sqrt{d\log\frac1\delta}}{\varepsilon m},\frac1{\sqrt m}\right\}
	$$
\end{Lem}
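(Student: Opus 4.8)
The plan is to prove this as a textbook online-to-batch guarantee for the within-task OGD run, using the population risk minimizer $\theta_t^\ast$ as the comparator and then choosing the step size so that the resulting rate is exactly the $\max\{\cdot\}$ appearing in the statement.

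First I would invoke the standard OGD regret bound. Since each sampled $\ell_{t,i}$ is convex and $G$-Lipschitz (so its (sub)gradients satisfy $\|\nabla\ell_{t,i}\|_2\le G$) and OGD is run from $\phi_t$ with constant step size $\eta$ over the $m$ rounds, we have, simultaneously for every fixed comparator $u\in\Theta$ along the trajectory, $\sum_{i=1}^m[\ell_{t,i}(\theta_{t,i})-\ell_{t,i}(u)]\le\frac{\|\phi_t-u\|_2^2}{2\eta}+\frac{\eta G^2m}{2}$. I would then specialize to $u=\theta_t^\ast$. Next I would run the online-to-batch conversion of \citet{cesa-bianchi:04}: because the losses are i.i.d. from $\mathcal P_t$ and each iterate $\theta_{t,i}$ is measurable with respect to $\ell_{t,1},\dots,\ell_{t,i-1}$ and hence independent of $\ell_{t,i}$, the tower property gives $\E\ell_{t,i}(\theta_{t,i})=\E\ell_t(\theta_{t,i})$; combining this with convexity of the population risk $\ell_t$, which yields $\ell_t(\hat\theta_t)\le\frac1m\sum_i\ell_t(\theta_{t,i})$ for the average iterate $\hat\theta_t$, and taking expectations of the regret bound produces $\E\ell_t(\hat\theta_t)-\ell_t(\theta_t^\ast)\le\frac{\|\phi_t-\theta_t^\ast\|_2^2}{2\eta m}+\frac{\eta G^2}{2}$.

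It then remains to set $\eta$ and read off the rate. Taking $\eta=\frac{\gamma}{G\sqrt n}$ (the same scale as the within-task noisy-SGD learning rate) and unwinding $n=\min\{m/8,\,\varepsilon^2m^2/(32d\log\frac1\delta)\}$ gives $\frac1{\sqrt n}=\max\{2\sqrt2/\sqrt m,\,4\sqrt2\sqrt{d\log\frac1\delta}/(\varepsilon m)\}\le 4\sqrt2\max\{1/\sqrt m,\,\sqrt{d\log\frac1\delta}/(\varepsilon m)\}$, while $\sqrt n/m\le 1/\sqrt{8m}$. Substituting these into the two summands of the previous bound shows each is at most a small constant multiple of $G\big(\frac{\|\phi_t-\theta_t^\ast\|_2^2}{\gamma}+\gamma\big)\max\{\cdot\}$ — the distance term picking up a factor well below $1$ and the $\gamma$ term a factor $2\sqrt2<5$ — so the whole thing is dominated by $5G\big(\frac{\|\phi_t-\theta_t^\ast\|_2^2}{\gamma}+\gamma\big)\max\big\{\frac{\sqrt{d\log\frac1\delta}}{\varepsilon m},\frac1{\sqrt m}\big\}$, as desired. (Equivalently one may take $\eta=\frac{\gamma}{G\sqrt m}$, obtaining the faster $1/\sqrt m$ rate directly and then loosening to the stated $\max$; the reason to state the slower rate is that it aligns the within-task term with the privacy-limited rate at which the meta-learner can track $\theta_t^\ast$ through the noisy iterate $\bar\theta_t$ in the subsequent ARUBA analysis.)

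The main obstacle is the admissibility of the comparator in the online-to-batch step. As defined, $\theta_t^\ast$ is the population minimizer \emph{closest to the noisy output} $\bar\theta_t$, so a priori it is a random quantity correlated with the very samples $\{\ell_{t,i}\}$ over which we are taking expectations; that correlation would invalidate the identity $\E\ell_{t,i}(\theta_t^\ast)=\ell_t(\theta_t^\ast)$ that the conversion relies on. The clean resolution is the quadratic-growth property \eqref{eq:qg}, which forces the population risk minimizer of each $\mathcal P_t$ to be unique and hence deterministic given $\mathcal P_t$ and independent of the $m$ drawn losses; the ``closest to $\bar\theta_t$'' qualifier then collapses to this single point, making $\theta_t^\ast$ a legitimate fixed comparator with $\ell_t(\theta_t^\ast)=\min_{\theta\in\Theta}\ell_t(\theta)$. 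Once this point is settled, only convexity and $G$-Lipschitzness of the losses are used, and the remaining work is the routine constant bookkeeping described above.
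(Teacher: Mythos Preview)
Your argument targets the wrong quantity. The symbol $\hat\theta_t$ in the lemma is a typo: the paper's proof is about the \emph{noisy-SGD} output $\bar\theta_t$, not the clean OGD average. This is clear from the proof text (``noisy gradient vectors,'' $n$ steps, noise variance $\sigma^2$, an appeal to the stability Lemma~3.4 of \citet{bassily:19}) and from how the result is consumed in Lemma~\ref{app:lem:main}, where it is applied to $\E\ell_t(\bar\theta_t)-\ell_t(\theta_t^\ast)$ after invoking quadratic growth on $\|\bar\theta_t-\theta_t^\ast\|_2^2$. A second tell: the clean-OGD step size $\eta$ is a free parameter of Setting~\ref{app:set:theory} that is only pinned down later in Theorem~\ref{app:thm:main}, so a bound that does not mention $\eta$ cannot be about $\hat\theta_t$; indeed your proof has to \emph{choose} $\eta=\gamma/(G\sqrt n)$, which the setting does not license.

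For the intended quantity $\bar\theta_t$, your online-to-batch route does not go through. Noisy SGD here runs $n$ steps by sampling \emph{with replacement} from the fixed batch $\{\ell_{t,i}\}_{i=1}^m$, so the per-step loss is not a fresh i.i.d.\ draw from $\mathcal P_t$ and the identity $\E[\ell_{t,i_j}(\theta_j)]=\E[\ell_t(\theta_j)]$ fails (conditioning on the iterate and on the $m$ samples gives the empirical risk, not the population risk). The paper therefore argues in two stages: first, the OGD-with-noisy-gradients analysis (cf.\ \citet[Lemma~3.3]{bassily:19}) gives an \emph{empirical} excess-risk bound $\frac{\|\phi_t-\theta_t^\ast\|_2^2}{2\eta n}+\frac{\eta G^2}{2}+\eta\sigma^2 d$; then substituting the DP noise level $\sigma^2=8nG^2\log\frac1\delta/(m^2\varepsilon^2)$ and applying the uniform-stability generalization bound of \citet[Lemma~3.4]{bassily:19} adds the term $\eta G^2 n/m$ and converts this to a \emph{population} excess-risk bound. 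Plugging in $n=\min\{m/8,\varepsilon^2m^2/(32d\log\frac1\delta)\}$ and the noisy-SGD step size $\eta=\gamma/(G\sqrt n)$ (which \emph{is} fixed in the setting) then yields the stated inequality. Your observation about the potential randomness of $\theta_t^\ast$ and its resolution via QG-uniqueness is correct and applies equally to the paper's argument.
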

\begin{proof}
	Similarly to Lemma~3.3 in \citet{bassily:19}, applying standard OGD analysis (e.g. Lemmas~14.1 and~14.9 of \citet{shalev-shwartz:14}) to noisy gradient vectors and taking expectations yields
	$$
	\E\left(\hat\ell_t(\hat\theta_t)-\hat\ell_t(\theta_t^\ast)\right)
	\le\frac{\|\phi_t-\theta_t^\ast\|_2^2}{2\eta n}+\frac{\eta G^2}2+\eta\sigma^2d
	$$
	where $n$ is the number of steps in noisy SGD and $\sigma^2$ is the variance of the noise added at each step.
	As in the proof of Theorem~3.2 of \citet{bassily:19}, substituting $\sigma^2=\frac{8nG^2\log\frac1\delta}{m^2\varepsilon^2}$ and applying the stability result in Lemma~3.4 of the same paper yields
	$$
	\E\ell_t(\hat\theta_t)-\ell_t(\theta_t^\ast)
	\le\frac{\|\phi_t-\theta_t^\ast\|_2^2}{2\eta n}+\frac{\eta G^2}2\left(\frac{16nd\log\frac1\delta}{m^2\varepsilon^2}+1\right)+\frac{\eta G^2n}m
	$$
	Substituting $n=\min\left\{\frac m8,\frac{\varepsilon^2m^2}{32d\log\frac1\delta}\right\}$ and $\eta=\frac\gamma{G\sqrt n}$ yields the result.
\end{proof}

\newpage
\begin{Lem}\label{app:lem:main}
	In Setting~\ref{app:set:theory}, fix some $\phi^\ast\in\Theta$ and define $\bar V^2=\frac1T\sum_{t=1}^T\E\|\phi^\ast-\theta_t^\ast\|_2^2$. 
	Then for $\gamma=\frac{120G}\alpha\max\left\{\frac{\sqrt{d\log\frac1\delta}}{\varepsilon m},\frac1{\sqrt m}\right\}$ we have
	$$\E\sum_{t=1}^T\frac{\|\phi_t-\theta_t^\ast\|_2^2}{2\eta m}\\
	\le\frac{D^2(1+\log T)+4\hat V^2T}{2\eta m}+\frac{7200G^2}{\alpha^2\eta m}\max\left\{\frac{d\log\frac1\delta}{\varepsilon^2m^2},\frac1m\right\}T
	$$
\end{Lem}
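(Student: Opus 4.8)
The plan is to recognize the meta-update $\phi_{t+1}=(1-1/t)\phi_t+\bar\theta_t/t$ as the Follow-the-Leader (FTL) iterate — equivalently the running mean $\phi_{t+1}=\frac1t\sum_{s=1}^t\bar\theta_s$ — on the sequence of $1$-strongly-convex surrogate losses $g_t(\phi)=\frac12\|\phi-\bar\theta_t\|_2^2$, and then, in the spirit of ARUBA, to treat each $\bar\theta_t$ as a noisy stand-in for the within-task comparator $\theta_t^\ast$. Writing $M=\max\left\{\frac{\sqrt{d\log\frac1\delta}}{\varepsilon m},\frac1{\sqrt m}\right\}$, the target splits into three pieces: an $O(D^2\log T)$ FTL-regret overhead, a comparator term proportional to $\hat V^2T=\sum_t\E\|\phi^\ast-\theta_t^\ast\|_2^2$, and a privacy-noise term of order $\frac{G^2M^2}{\alpha^2}T$. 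I would establish each in turn and then close a self-referential loop through the choice of $\gamma$.

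First I would bound the FTL regret on the $g_t$ pathwise. Using the running-mean identity $\phi_{t+1}-\phi_t=\frac1t(\bar\theta_t-\phi_t)$ together with the be-the-leader lemma and convexity of $g_t$, each per-step gap obeys $g_t(\phi_t)-g_t(\phi_{t+1})\le\langle\nabla g_t(\phi_t),\phi_t-\phi_{t+1}\rangle=\frac1t\|\phi_t-\bar\theta_t\|_2^2\le\frac{D^2}t$, since both points lie in $\Theta$ (diameter $D$). Summing the harmonic series gives, for every fixed $\phi^\ast\in\Theta$,
$$\sum_{t=1}^T\|\phi_t-\bar\theta_t\|_2^2\le\sum_{t=1}^T\|\phi^\ast-\bar\theta_t\|_2^2+O\!\left(D^2(1+\log T)\right).$$
Next I would pass from the $\bar\theta_t$-centered quantities to the $\theta_t^\ast$-centered ones by a Young/triangle inequality on both sides: this converts $\|\phi_t-\bar\theta_t\|_2^2$ into the target $\|\phi_t-\theta_t^\ast\|_2^2$ and $\|\phi^\ast-\bar\theta_t\|_2^2$ into the comparator $\|\phi^\ast-\theta_t^\ast\|_2^2$, at the price of extra $\|\bar\theta_t-\theta_t^\ast\|_2^2$ noise terms on both sides.

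These noise terms are exactly where privacy enters. Since $\theta_t^\ast$ is the population risk-minimizer nearest $\bar\theta_t$, the quadratic-growth property \eqref{eq:qg} gives $\frac\alpha2\|\bar\theta_t-\theta_t^\ast\|_2^2\le\ell_t(\bar\theta_t)-\ell_t(\theta_t^\ast)$, and Lemma~\ref{app:lem:noisysgd} (the excess-risk bound for the noisy-SGD output fed to the meta-learner) controls the right-hand side in expectation. Combining them and substituting $\gamma=\frac{120G}\alpha M$ yields
$$\E\|\bar\theta_t-\theta_t^\ast\|_2^2\le\frac{10GM}{\alpha\gamma}\E\|\phi_t-\theta_t^\ast\|_2^2+\frac{10GM\gamma}\alpha=\frac1{12}\E\|\phi_t-\theta_t^\ast\|_2^2+\frac{1200G^2M^2}{\alpha^2}.$$

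Finally I would take expectations throughout, plug this noise bound into both families of cross-terms, and collect. The crucial and most delicate point is the self-reference: the bound on $\|\bar\theta_t-\theta_t^\ast\|_2^2$ feeds back the very quantity $\E\|\phi_t-\theta_t^\ast\|_2^2$ that the lemma is trying to control. This is precisely why $\gamma$ is taken as large as $\frac{120G}\alpha M$ — it drives the feedback coefficient $\frac{10GM}{\alpha\gamma}$ down to $\tfrac1{12}$, so that after the triangle-inequality steps the summed $\E\sum_t\|\phi_t-\theta_t^\ast\|_2^2$ reappears on the right with coefficient strictly below $1$ and can be subtracted to the left. What remains is the FTL overhead, the comparator term $\hat V^2T$, and a residual noise term $\tfrac{7200G^2M^2}{\alpha^2}T$; dividing the resulting bound by $2\eta m$ reproduces the stated inequality, with the $\tfrac{7200}{\alpha^2\eta m}$ factor appearing exactly. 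The main obstacle is thus closing this loop with enough slack for absorption, which relies on the smoothness budget on $\beta$ and the step-count $n$ in Setting~\ref{app:set:theory} making the noisy-SGD analysis of Lemma~\ref{app:lem:noisysgd} applicable; the precise comparator and logarithmic constants ($4\hat V^2T$ and $D^2(1+\log T)$) then follow from handling the cross-terms carefully rather than through the crude factor-two bounds sketched above.
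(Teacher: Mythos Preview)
Your proposal is correct and follows essentially the same route as the paper: the FTL regret bound on the running-mean updates $\phi_t$ with surrogate losses $\frac12\|\phi-\bar\theta_t\|_2^2$, two rounds of Young/triangle to swap between $\bar\theta_t$ and $\theta_t^\ast$, the quadratic-growth step to turn $\|\bar\theta_t-\theta_t^\ast\|_2^2$ into excess population risk, Lemma~\ref{app:lem:noisysgd} to control that risk, and then closing the self-referential loop via the choice of $\gamma$ so the feedback coefficient is strictly below one. The paper carries the aggregate factor $6$ through the QG/noisy-SGD chain (giving $\tfrac{60GM}{\alpha\gamma}=\tfrac12$ and $6\cdot\tfrac{1200G^2M^2}{\alpha^2}=\tfrac{7200G^2M^2}{\alpha^2}$), which matches your bookkeeping exactly.
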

\begin{proof}
	We first bound the left-hand side without the denominator as
	\begin{align*}
		\E\sum_{t=1}^T&\|\phi_t-\theta_t^\ast\|_2^2\\
		&\le2\E\sum_{t=1}^T\|\phi_t-\bar\theta_t\|_2^2+\|\bar\theta_t-\theta_t^\ast\|_2^2\\
		&\le D^2(1+\log T)+2\E\sum_{t=1}^T\|\phi^\ast-\bar\theta_t\|_2^2+\|\bar\theta_t-\theta_t^\ast\|_2^2\\
		&\le D^2(1+\log T)+2\E\sum_{t=1}^T2\|\phi^\ast-\theta_t^\ast\|_2^2+3\|\theta_t^\ast-\bar\theta_t\|_2^2\\
		&=D^2(1+\log T)+4\bar V^2T+6\E\sum_{t=1}^T\|\bar\theta_t-\theta_t^\ast\|_2^2\\
		&\le D^2(1+\log T)+4\bar V^2T+\frac{12}\alpha\sum_{t=1}^T\E\ell_t(\bar\theta_t)-\ell_t(\theta_t^\ast)\\
		&\le D^2(1+\log T)+4\bar V^2T+\frac{60G}\alpha\max\left\{\frac{\sqrt{d\log\frac1\delta}}{\varepsilon m},\frac1{\sqrt m}\right\}\sum_{t=1}^T\E\left(\frac{\|\phi_t-\theta_t^\ast\|_2^2}\gamma+\gamma\right)
	\end{align*}
	where in the last step we applied Lemma~\ref{app:lem:noisysgd}.
	Substituting $\gamma=\frac{120G}\alpha\max\left\{\frac{\sqrt{d\log\frac1\delta}}{\varepsilon m},\frac1{\sqrt m}\right\}$ yields
	$$\E\sum_{t=1}^T\|\phi_t-\theta_t^\ast\|_2^2
	\le D^2(1+\log T)+4\bar V^2T+\frac{14400G^2}{\alpha^2}\max\left\{\frac{d\log\frac1\delta}{\varepsilon^2m^2},\frac1m\right\}T$$
	The result follows by dividing by $2\eta m$.
\end{proof}

\newpage
\begin{Thm}\label{app:thm:main}
	In Setting~\ref{app:set:theory}, suppose all distributions $\mathcal P_t$ were drawn i.i.d. from some meta-distribution $\mathcal Q$ and we used $\gamma=\frac{120G}\alpha\max\left\{\frac{\sqrt{d\log\frac1\delta}}{\varepsilon m},\frac1{G\sqrt m}\right\}$.
	Suppose we draw another task-distribution $\mathcal P\sim\mathcal Q$ with population risk $\ell_{\mathcal P}$ and minimizer $\theta_{\mathcal P}$, set $\hat\phi=\frac1T\sum_{t=1}^T\phi_t$, and run OGD with learning rate $\eta=\frac{V+\frac1{\alpha\sqrt m}}{\sqrt m}$ starting from $\hat\phi$ on $m$ samples from $\mathcal P$.
	Then the average iterate $\hat\theta$ satisfies
	$$\E(\ell_{\mathcal P}(\hat\theta)-\ell_{\mathcal P}(\theta^\ast))
	\le\frac{7GV}{2\sqrt m}+\frac{7201G}\alpha\max\left\{\frac{d\log\frac1\delta}{\varepsilon^2m^2},\frac1m\right\}+\frac{\alpha GD^2}{2T}(1+\log T)$$
	for $V^2=\min_{\phi\in\Theta}\E_{\mathcal P\sim\mathcal Q}\max_{\theta_{\mathcal P}}\|\phi-\theta_{\mathcal P}\|_2^2$.
\end{Thm}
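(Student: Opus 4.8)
The plan is a nested online-to-batch conversion that composes the within-task regret of OGD on the fresh test task with the task-averaged meta-regret already controlled by Lemma~\ref{app:lem:main}. First I would analyze the test task in isolation. Since $\hat\theta$ is the average of the OGD iterates produced from the initialization $\hat\phi$ on $m$ i.i.d.\ samples of a $G$-Lipschitz convex loss, the standard OGD regret bound divided by $m$, together with the online-to-batch inequality of \citet{cesa-bianchi:04}, gives $\E(\ell_{\mathcal P}(\hat\theta)-\ell_{\mathcal P}(\theta_{\mathcal P}))\le\frac{\E\|\hat\phi-\theta_{\mathcal P}\|_2^2}{2\eta m}+\frac{\eta G^2}2$, where $\theta_{\mathcal P}$ is a population-risk minimizer of $\mathcal P$. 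Because $\ell_{\mathcal P}(\theta_{\mathcal P})\le\ell_{\mathcal P}(\theta^\ast)$ for any $\theta^\ast\in\Theta$, this simultaneously bounds the excess risk relative to the arbitrary comparator $\theta^\ast$ in the statement, so it suffices to control $\E\|\hat\phi-\theta_{\mathcal P}\|_2^2$.

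\textbf{Reducing to the training tasks.} The crux is to relate the test-time quantity $\E\|\hat\phi-\theta_{\mathcal P}\|_2^2$ to the training-time average $\frac1T\E\sum_{t=1}^T\|\phi_t-\theta_t^\ast\|_2^2$ bounded in Lemma~\ref{app:lem:main}. Since $\hat\phi=\frac1T\sum_{t=1}^T\phi_t$ and $\theta\mapsto\|\theta-\theta_{\mathcal P}\|_2^2$ is convex, Jensen's inequality gives $\|\hat\phi-\theta_{\mathcal P}\|_2^2\le\frac1T\sum_{t=1}^T\|\phi_t-\theta_{\mathcal P}\|_2^2$. Each meta-iterate $\phi_t$ is a function only of the tasks seen strictly before $t$, hence independent of the data of task $t$, while the test distribution $\mathcal P$ is drawn from the same $\mathcal Q$ independently of every $\phi_t$; by this exchangeability the law of $\|\phi_t-\theta_{\mathcal P}\|_2^2$ matches that of $\|\phi_t-\theta_t^\ast\|_2^2$, and the $\max_{\theta_{\mathcal P}}$ in the definition of $V^2$ absorbs the ambiguity of non-unique minimizers. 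Taking expectations yields $\E\|\hat\phi-\theta_{\mathcal P}\|_2^2\le\frac1T\E\sum_{t=1}^T\|\phi_t-\theta_t^\ast\|_2^2$. This exchange is the step I expect to require the most care, since it is precisely where the i.i.d.\ meta-distribution assumption and the independence of $\phi_t$ from task $t$ are used.

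\textbf{Invoking the meta-regret bound.} I would then substitute Lemma~\ref{app:lem:main}, first noting that its comparator term satisfies $\bar V^2=\frac1T\sum_{t=1}^T\E\|\phi^\ast-\theta_t^\ast\|_2^2\le V^2$: each $\theta_t^\ast$ minimizes $\ell_t$, so $\|\phi^\ast-\theta_t^\ast\|_2^2\le\max_{\theta_{\mathcal P}}\|\phi^\ast-\theta_{\mathcal P}\|_2^2$, whose $\mathcal Q$-expectation equals $V^2$ by the choice of $\phi^\ast$. Combining the previous two paragraphs (and using that $\gamma$ is fixed exactly as in the lemma) gives $\E(\ell_{\mathcal P}(\hat\theta)-\ell_{\mathcal P}(\theta^\ast))\le\frac1{2\eta m}\left(\frac{D^2(1+\log T)}T+4V^2+\frac{14400G^2}{\alpha^2}M\right)+\frac{\eta G^2}2$, where $M=\max\left\{\frac{d\log\frac1\delta}{\varepsilon^2m^2},\frac1m\right\}$.

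\textbf{Tuning the step size.} Finally I would plug in $\eta=\frac{V+\frac1{\alpha\sqrt m}}{\sqrt m}$ and exploit that $a:=V+\frac1{\alpha\sqrt m}$ satisfies both $a\ge V$ and $a\ge\frac1{\alpha\sqrt m}$. The first bound turns $\frac{4V^2}{2\eta m}=\frac{2V^2}{a\sqrt m}$ into an $\mathcal O(V/\sqrt m)$ quantity that combines with $\frac{\eta G^2}2=\frac{aG^2}{2\sqrt m}$ to form the $\frac{7GV}{2\sqrt m}$ term; the second bound, via $\frac1{2\eta m}=\frac1{2a\sqrt m}\le\frac\alpha2$, converts the $D^2(1+\log T)/T$ contribution into the $\frac{\alpha GD^2}{2T}(1+\log T)$ term and the $M$-contribution into the $\frac{7201G}\alpha M$ term, the residual factors of $G$ propagating from the Lipschitz scaling of the within-task regret. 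Collecting the three resulting groups gives exactly the stated bound, and the $(\varepsilon,\delta)$-DP guarantee is inherited verbatim from the within-task noisy-SGD mechanism of \citet{bassily:19} by immunity to post-processing.
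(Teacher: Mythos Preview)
Your proposal is correct and follows essentially the same route as the paper: the paper also starts from the within-task OGD/online-to-batch bound $\E(\ell_{\mathcal P}(\hat\theta)-\ell_{\mathcal P}(\theta^\ast))\le\frac{\E\|\hat\phi-\theta^\ast\|_2^2}{2\eta m}+\eta G^2$, then reduces $\E\|\hat\phi-\theta^\ast\|_2^2$ to the training-task average controlled by Lemma~\ref{app:lem:main}, phrasing that reduction as a second online-to-batch conversion (citing Proposition~A.1 of \citet{khodak:19b}) rather than as Jensen plus exchangeability, and finally substitutes $\eta=(V+\frac1{\alpha\sqrt m})/(G\sqrt m)$. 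Your Jensen-plus-exchangeability argument is precisely the proof of that meta-level online-to-batch step in the i.i.d.\ convex case and even yields a slightly tighter intermediate constant ($4V^2$ in place of the paper's $5V^2$, since you avoid introducing the extra comparator term $\E\|\phi^\ast-\theta^\ast\|_2^2$), but the two arguments are otherwise identical.
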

\begin{proof}
	Applying online-to-batch conversion (e.g. Proposition~A.1 in \citet{khodak:19b}) twice and substituting Lemma~\ref{app:lem:main} yields
	\begin{align*}
		\E(\ell_{\mathcal P}(\hat\theta)-&\ell_{\mathcal P}(\theta^\ast))\\
		&\le\E\frac{\|\hat\phi-\theta^\ast\|_2^2}{2\eta m}+\eta G^2\\
		&\le\E\frac{\|\phi^\ast-\theta^\ast\|_2^2}{2\eta m}+\eta G^2+\frac1{2\eta mT}\sum_{t=1}^T\E\|\phi_t-\theta_t^\ast\|_2^2\\
		&\le\E\frac{\|\phi^\ast-\theta^\ast\|_2^2}{2\eta m}+\eta G^2+\frac{D^2\frac{1+\log T}T+4\E\bar V^2}{2\eta m}+\frac{7200G^2}{\alpha^2\eta m}\max\left\{\frac{d\log\frac1\delta}{\varepsilon^2m^2},\frac1m\right\}\\
		&=\frac{5V^2}{2\eta m}+\eta G^2+\frac{7200G^2}{\alpha^2\eta m}\max\left\{\frac{d\log\frac1\delta}{\varepsilon^2m^2},\frac1m\right\}+\frac{D^2}{2\eta mT}(1+\log T)
	\end{align*}
	where we have applied $\E\bar V^2\le V^2$.
	Substituting $\eta=\frac{V+\frac1{\alpha\sqrt m}}{G\sqrt m}$ yields the result.
\end{proof}

\newpage

\section{Experiment Details} \label{appendix:experiments}

\paragraph{Datasets:} We train a next word predictor for two federated datasets: (1) The Shakespeare dataset as preprocessed by~\citep{Caldas2019}, and (2) a dataset constructed from Wikipedia articles, where each article is used as a different task. For each dataset, we set a fixed number of tokens per task, discard tasks with less tokens than the specified, and discard samples from those tasks with more. For Shakespeare, we set the number of tokens per task to $800$ tokens, leaving $279$ tasks for meta-training, $31$ for meta-validation, and $35$ for meta-testing. For Wikipedia, we set the number of tokens to $1,600$, which corresponds to having $2,179$ tasks for meta-training, $243$ for meta-validation, and $606$ for meta-testing. For the meta-validation and meta-test tasks, $75\%$ of the tokens are used for local training, and the remaining $25\%$ for local testing. 

For the few-shot image classification experiments, we follow the standard set-up by splitting labels into training and testing and forming training tasks by randomly drawing labels from the training set. At evaluation time, we draw from the test set. 

\paragraph{Model Structure:} Our model first maps each token to an embedding of dimension $200$ before passing it through an LSTM of two layers of $200$ units each. The LSTM emits an output embedding, which is scored against all items of the vocabulary via dot product followed by a softmax. We build the vocabulary from the tokens in the meta-training set and fix its length to $10,000$. We use a sequence length of $10$ for the LSTM and, just as~\citep{McMahan2017DPLanguageModel}, we evaluate using \texttt{AccuracyTop1} (i.e., we only consider the predicted word to which the model assigned the highest probability) and consider all predictions of the unknown token as incorrect. For Omniglot and Mini-ImageNet, we use the architectures from \citet{Nichol2018Reptile} to also match the ones from \citet{Finn2017MAML}. We evaluate in the standard transductive setting.

\paragraph{Hyperparameters:} 

For the language-modeling experiments, we tune the hyperparameters on the set of meta-validation tasks. For both datasets and all versions of the meta-learning algorithm, we tune hyperparameters in a two step process. We first tune all the parameters that are not related to refinement: the meta learning rate, the local (within-task) meta-training learning rate, the maximum gradient norm, and the decay constant. Then, we use the configuration with the best accuracy pre-refinement and then tune the refinement parameters: the refine learning rate, refine batch size, and refine epochs.

All other hyperparameters are kept fixed for the sake of comparison: full batch steps were taken on within-task data, with the maximum number of microbatches used for the task-global DP model. The parameter search spaces from which we sample are given in Tables \ref{table:non_private_params}, \ref{table:task_global_params}, \ref{table:local_dp_params} while Tables \ref{table:shakespeare_params} and \ref{table:wiki_params} contain our final choices. Note that the space for Local DP, especially in terms of the clipping threshold, is distinctively different from the others, as we did not find that searching through ranges similar to those for non-private and task-global DP led to learning high-quality meta-initializations. 

For Omniglot, we largely based our hyperparameters on the choices of \citet{Nichol2018Reptile} for $5$-way classification. We vary $m$, the number of training shots, but we continue to take $5$ SGD steps of expected size $m$ within task and we leave the test-time SGD procedure exactly the same. However, we do tune for privacy clipping thresholds $\{0.01, 0.025, 0.05, 0.1, 0.2, 0.3, 0.4, 0.5\}$, Adam Learning Rates for meta-training tasks $\{10^{-4}, 5 \times 10^{-4}, 10^{-3}, 5 \times 10^{-3} \}$, and meta-batch sizes of $\{5, 15, 25, 50 \}.$

For Mini-ImageNet, we perform a similar search except we also double the inner batch size to $2m$ (trading off less privacy amplification due to subsampling). We continue to tune for privacy clipping thresholds $\{0.01 ,0.025, 0.05, 0.1, 0.3,0.5, 0.7, 0.9, 1.1, 1.3, 1.5\}$, Adam Learning Rates $\{10^{-4}, 5 \times 10^{-4}, 10^{-3}, 5 \times 10^{-3} \}$, and meta-batch sizes of $\{5, 15, 25, 50 \}.$

\newpage
\begin{table}[]
\centering
\caption{ \label{table:non_private_params} Hyperparameter Search Space for Non-Private Training}

\begin{tabular}{@{}ccc@{}}
\toprule
   & Shakespeare-$800$ & Wiki-$1600$   \\ \bottomrule 
Visits Per Task & $\{1,2,3,4,5,6,7,8,9\}$ & $\{1,2,3\}$ \\
Tasks Per Round &$\{5,10\}$ & $\{5,10\}$ \\
Within-Task Steps  & $\{1,3,{5},7,9\}$ & $\{1,{3},5,7,9\}$ \\
Meta LR & $\{1,\sqrt{2},2, 2\sqrt{2},4,4\sqrt{2},{8}, 8\sqrt{2}\}$ & $\{1,\sqrt{2},2, {2\sqrt{2}},4,4\sqrt{2},8,8\sqrt{2}\}$ \\
Meta Decay Rate & $\{0, 0.001, 0.005, {0.01}, 0.025, 0.05, 0.1\}$ & $\{0,{0.001},0.005,0.01,0.025, 0.05\}$ \\
Within-Task LR  & $\{1, \sqrt{2},{2},2\sqrt{2},4,4\sqrt{2},8\}$ & $\{1,\sqrt{2},{2}, 2\sqrt{2},4,4\sqrt{2},8\}$ \\
$L_2$ Clipping  & $\{ 0.4, {0.5}, 0.6, 0.7, 0.8, 0.9, 1.0\}$ & $\{0.3, 0.5, 0.6, 0.7, 0.8, 0.9, {1.0}\}$   \\\bottomrule 
Refine LR  & $\{0.1, {0.15}, 0.3, 0.5, 0.7, 0.8\}$ & $\{{0.1}, 0.15, 0.3, 0.5, 0.7, 0.8\}$ \\
Refine Batch Size  & $\{10, 20, {30}, 60\}$ & $\{{10}, 20, 30, 60, 120\}$ \\
Refine Epochs  & $\{{1},2,3\}$ & $\{{1},2,3\}$  \\ \bottomrule

\end{tabular}

\end{table}

\begin{table}[]
\centering
\caption{ \label{table:task_global_params} Hyperparameter Search Space for Task-Global DP Training}

\begin{tabular}{@{}ccc@{}}
\toprule
   & Shakespeare-$800$ & Wiki-$1600$   \\ \bottomrule 
Visits Per Task & $\{1,2,3\}$ & $\{1,2\}$ \\
Tasks Per Round & $\{5,10\}$ & $\{5,10\}$ \\
Within-Task Steps  & $1$ & $1$ \\
Meta LR & $\{1, \sqrt{2},2,2\sqrt{2},4,4\sqrt{2},8,8\sqrt{2}\}$ & $\{1,\sqrt{2},2,2\sqrt{2},4,4\sqrt{2},8, 8\sqrt{2} \}$ \\
Meta Decay Rate & $\{0, 0.001, 0.005, 0.01, 0.025, 0.05, 0.1\}$ & $\{0, 0.001, 0.005, 0.01, 0.025, 0.05\}$ \\
Within-Task LR  & $\{1,\sqrt{2},2\sqrt{2},4,4\sqrt{2},8\}$ & $\{1,\sqrt{2},2\sqrt{2},4,4\sqrt{2},8\}$ \\
$L_2$ Clipping & $\{0.4, 0.5, 0.6, 0.7, 0.8, 0.9, 1.0\}$ & $\{0.3, 0.4, 0.5, 0.6, 0.7, 0.8, 0.9, 1.0\}$   \\\bottomrule 
Refine LR  & $\{0.1, 0.15, 0.3, 0.5, 0.7, 0.8\}$ & $\{0.1, 0.15, 0.3, 0.5, 0.7, 0.8\}$ \\
Refine Batch Size  & $\{10, 20, 30, 60\}$ & $\{10, 20, 30, 60, 120\}$ \\
Refine Epochs  & $\{1,2,3\}$ & $\{1,2,3\}$  \\ \bottomrule

\end{tabular}

\end{table}

\begin{table}[]
\centering
\caption{ \label{table:local_dp_params} Hyperparameter Search Space for Local-DP Training}

\begin{tabular}{@{}ccc@{}}
\toprule
   & Shakespeare-$800$ & Wiki-$1600$   \\ \bottomrule 
Visits Per Task & $\{1,2,3\}$ & $\{1,2\}$ \\
Tasks Per Round & $\{5,10,20\}$ & $\{10,20,40,80\}$ \\
Within-Task Steps  & $\{1,2,3\}$ & $\{1,2,3\}$ \\
Meta LR & $\{1, \sqrt{2},2,2\sqrt{2},4,4\sqrt{2},8,8\sqrt{2}\}$ & $\{1, \sqrt{2}, 2, 2\sqrt{2}, 4, 4\sqrt{2}, 8, 8\sqrt{2}\}$ \\
Meta Decay Rate & $\{0, 0.001, 0.005, 0.01, 0.025, 0.05, 0.1\}$ & $\{0, 0.001, 0.005, 0.01, 0.025, 0.05\}$\\
Within-Task LR  & $\{1,\sqrt{2},2\sqrt{2},4,4\sqrt{2},8\}$ & $\{1,\sqrt{2}, 2\sqrt{2},4,4\sqrt{2},8\}$ \\
$L_2$ Clipping & $\{0.005, 0.01, 0.025, 0.05, 0.1, 0.25, 0.5\}$ &  $\{0.005, 0.01, 0.025, 0.05, 0.1, 0.25\}$  \\\bottomrule 
Refine LR  & $\{0.1, 0.15, 0.3, 0.5, 0.7, 0.8\}$ & $\{0.1, 0.15, 0.3, 0.5, 0.7, 0.8\}$ \\
Refine Batch Size  & $\{10, 20, 30, 60\}$ & $\{10, 20, 30, 60, 120\}$ \\
Refine Epochs  & $\{1,2,3\}$ & $\{1,2,3\}$  \\ \bottomrule

\end{tabular}

\end{table}

\begin{table}[]
\centering
\caption{ \label{table:shakespeare_params} Final Hyperparameters for Shakespeare-$800$}

\begin{tabular}{@{}cccccccc@{}}
\toprule
  & \shortstack{Non-\\private} &  \shortstack{T-G \\ $\varepsilon = 22.5$} &  \shortstack{T-G \\ $\varepsilon = 9.2$} &  \shortstack{T-G \\ $\varepsilon = 4.5$} &  \shortstack{Local \\ $\varepsilon = 22.5$} &  \shortstack{Local \\ $\varepsilon = 9.2$} &  \shortstack{Local \\ $\varepsilon=4.5$} \\ \bottomrule 
Visits Per Task &  7 & 2 & 2 & 1 & 2 & 2 & 1\\
Tasks Per Round & 5 &  5 & 5 & 5 & 20 & 5 & 20 \\
Within-Task Steps & 5 & 1 & 1 & 1 & 4 & 1 & 2  \\
Meta LR & 8 & $8\sqrt{2}$ & 8 & $8\sqrt{2}$ & $4\sqrt{2}$  & $4\sqrt{2}$ & 4\\
Meta Decay Rate & 0.01 & 0.01 & 0.01 & 0.05 & 0.1 & 0 & 0\\
Within-Task LR  & 2 & $2\sqrt{2}$ & 2 & 2 & $4\sqrt{2}$ & $4\sqrt{2}$ & 1 \\
$L_2$ Clipping & 0.5 & 0.6 & 0.5 & 0.4 & 0.1 & 0.01 & 0.01  \\\bottomrule 
Refine LR & 0.15 & 0.5 & 0.1 & 0.3 & 0.8 & 0.8 & 0.5   \\
Refine Batch Size & 30 & 10 & 60 & 30 & 10 & 10 & 10 \\
Refine Epochs & 1 & 1 & 1 & 3 & 3 & 3 & 3   \\ \bottomrule

\end{tabular}

\end{table}

\begin{table}[]
\centering
\caption{ \label{table:wiki_params} Final Hyperparameters for Wiki-$800$}

\begin{tabular}{@{}cccccccc@{}}
\toprule
  & \shortstack{Non-\\private} &  \shortstack{T-G \\ $\varepsilon = 22.5$} &  \shortstack{T-G \\ $\varepsilon = 9.2$} &  \shortstack{T-G \\ $\varepsilon = 4.5$} &  \shortstack{Local \\ $\varepsilon = 22.5$} &  \shortstack{Local \\ $\varepsilon = 9.2$} &  \shortstack{Local \\ $\varepsilon=4.5$} \\ \bottomrule 
Visits Per Task & 2 & 1 & 1 & 1 & 1 &  1 & 1\\
Tasks Per Round & 5 & 10 & 10 & 20 & 20 & 20 & 20 \\
Within-Task Steps & 3 & 1 & 1 & 1 & 2 & 2 & 2 \\
Meta LR & $2\sqrt{2}$ & $2\sqrt{2}$ & 4 & 4 & 8 & $4\sqrt{2}$ & 8\\
Meta Decay Rate & 0.001 & 0 & 0.001 & 0.005 & 0.005 & 0.025 & 0\\
Within-Task LR & 2 & 8 & $4\sqrt{2}$ & 8 & $2\sqrt{2}$ & $2\sqrt{2}$ & $2\sqrt{2}$\\
$L_2$ Clipping & 1 & 0.8 & 0.7 & 0.8 & 0.025 & 0.05 & 0.005\\\bottomrule 
Refine LR & 0.1 & 0.8 & 0.5 & 0.7 & 0.8 & 0.8 & 0.8 \\
Refine Batch Size & 10 & 10 & 60 & 10 & 10 & 10 & 10 \\
Refine Epochs & 1 & 2 & 2 & 2 & 2 & 2 & 3   \\ \bottomrule

\end{tabular}

\end{table}

\end{document}